
\documentclass{article}

\usepackage{microtype}
\usepackage{graphicx}
\usepackage{booktabs} 

\usepackage[bookmarksnumbered=true,bookmarksopen=true,bookmarksopenlevel=1]{hyperref}



\usepackage[accepted]{icml2024}

\usepackage{amsmath}
\usepackage{amssymb}
\usepackage{mathtools}
\usepackage{amsthm}

\usepackage[capitalize,noabbrev]{cleveref}

\theoremstyle{plain}
\newtheorem{theorem}{Theorem}[section]
\newtheorem{proposition}[theorem]{Proposition}

\theoremstyle{definition}

\theoremstyle{remark}
\newtheorem{remark}[theorem]{Remark}
\newtheorem{claim}[theorem]{Claim}

\usepackage{color}
\usepackage{bm}
\usepackage{bbm}
\usepackage{caption}
\usepackage{subcaption}
\usepackage{multirow}
\usepackage{siunitx}
\usepackage{xspace}

\usepackage{enumitem}
\setlist[itemize]{noitemsep, topsep=0.05em}
\setlist[enumerate]{noitemsep, topsep=0.05em}
\setlength{\textfloatsep}{12pt plus 2pt minus 2pt}

\captionsetup[sub]{font=footnotesize}

\definecolor{todo}{RGB}{0,200,200}

\let\hat\widehat

\allowdisplaybreaks


\newcommand{\E}{{\mathbb{E}}}

\newcommand{\var}{\text{var}}
\newcommand{\cov}{\text{cov}}

\newcommand{\bg}{\bm{g}}

\newcommand{\bo}{\bm{o}}

\newcommand{\bx}{\bm{x}}

\newcommand{\bz}{\bm{z}}

\newcommand{\btheta}{\bm{\theta}}
\newcommand{\bphi}{\bm{\phi}}

\newcommand{\Lcal}{{\mathcal{L}}}

\newcommand{\loss}{\Lcal}
\newcommand{\lossaux}{\loss^{\mathsf{aux}}}
\newcommand{\astar}{a^{\star}}
\newcommand{\bstar}{b^{\star}}
\newcommand{\ahat}{\hat{a}}
\newcommand{\bhat}{\hat{b}}
\newcommand{\ehat}{\hat{e}}
\newcommand{\ehatp}{\hat{e}_{+}}

\newcommand{\sys}{\texttt{EE-LLM}\xspace}

\newcommand{\IN}{\mathsf{IN}}
\newcommand{\FE}{\mathsf{FE}}
\newcommand{\EE}{\mathsf{EE}}
\newcommand{\BB}{\mathsf{BB}}

\newcommand{\mact}{m^{\dagger}}
\newcommand{\osf}{\mathsf{o}}
\newcommand{\ind}{\mathbbm{1}}



\icmltitlerunning{\sys: Large-Scale Training and Inference of Early-Exit Large Language Models with 3D Parallelism}

\begin{document}

\twocolumn[
\icmltitle{\sys: Large-Scale Training and Inference of Early-Exit\\Large Language Models with 3D Parallelism}



\icmlsetsymbol{equal}{*}

\begin{icmlauthorlist}
\icmlauthor{Yanxi Chen}{equal,AlibabaCN}
\icmlauthor{Xuchen Pan}{equal,AlibabaCN}
\icmlauthor{Yaliang Li}{AlibabaCN}
\icmlauthor{Bolin Ding}{AlibabaCN}
\icmlauthor{Jingren Zhou}{AlibabaCN}
\end{icmlauthorlist}

\icmlaffiliation{AlibabaCN}{Alibaba Group. \{chenyanxi.cyx, panxuchen.pxc, yaliang.li, bolin.ding, jingren.zhou\}@alibaba-inc.com}

\icmlcorrespondingauthor{Yaliang Li}{yaliang.li@alibaba-inc.com}

\icmlkeywords{Large Language Models, Early Exiting, 3D Parallelism}

\vskip 0.3in
]



\printAffiliationsAndNotice{\icmlEqualContribution} 

\begin{abstract}

We present \sys, a framework for large-scale training and inference of early-exit large language models (LLMs).
While recent works have shown preliminary evidence for the efficacy of early exiting in accelerating LLM inference,
\sys makes a foundational step towards scaling up early-exit LLMs by supporting their training and inference with massive 3D parallelism. 
Built upon Megatron-LM, \sys implements a variety of algorithmic innovations and performance optimizations tailored to early exiting,
including a lightweight method that facilitates backpropagation for the early-exit training objective with pipeline parallelism,
techniques of leveraging idle resources in the original pipeline schedule for computation related to early-exit layers,
and two approaches of early-exit inference that are compatible with KV caching for autoregressive generation.
Our analytical and empirical study shows that \sys achieves great training efficiency
with negligible computational overhead compared to standard LLM training,
as well as outstanding inference speedup without compromising output quality.
To facilitate further research and adoption, 
we release \sys at \url{https://github.com/pan-x-c/EE-LLM}.
    
\end{abstract}

\section{Introduction}
\label{sec:introduction}

Large language models (LLMs) have amazed the world with their astonishing abilities and performance in solving a wide range of problems \cite{Brown2020,OpenAI2023,Chowdhery2022,Zhang2022,Touvron2023llama,Touvron2023llama2}. 
This is accompanied by excessive costs and carbon emissions for training and deploying these models, as their sizes have grown rapidly in recent years.
In general, costs for inference are dominant in the long run, as each model will be deployed to solve many problems for a long period of time.
This has inspired researchers and engineers to develop various approaches for accelerating LLM inference.

The focus of this work is \emph{early exiting}, which accelerates inference by allowing a deep neural network to make predictions and exit early in the network for certain inputs, without running the forward pass through the full network.
This is achieved by augmenting a standard neural network architecture (with a single exit at the end) with additional early-exit layers that transform intermediate hidden states into early outputs.
The early-exit model architecture, as visualized in Figure~\ref{fig:multi_exit_model}, not only retains the full capacity of a large model, but is also capable of adaptively using a smaller amount of computation for solving simpler problems.
The idea of early exiting is a natural analogue of how humans speak, think, and make decisions: 
not every problem requires or deserves the same amount of consideration, and one shall opt for fast reaction to simple problems without overthinking \cite{Kaya2018}.
Early exiting has been an active research area and widely applied in natural language processing, computer vision, and other areas \cite{Liu2020,Elbayad2020,Schwartz2020,Xin2020,Schuster2021,Teerapittayanon2016,Huang2018,Scardapane2020WhySW,Han2021DynamicNN,Xu2023SurveyDynamic}.
More recently, it has started to gain attention in the generative LLM domain \cite{Schuster2021,DelCorro2023,Bae2023,Varshney2023AcceleratingLI}, and is recognized as a promising direction for further reducing the latency and costs of LLM inference \cite{pope2022efficiently}.

\paragraph{Goal and motivations.}

Our primary goal is to build the infrastructure for \emph{scaling up} training and inference of early-exit LLMs.
This is motivated by the observation that sizes of early-exit models in prior works are still relatively small.
While the largest early-exit LLM that we are aware of has 13 billion parameters \cite{Varshney2023AcceleratingLI},
standard LLMs at much larger scales, e.g.~the 175B GPT-3 \cite{Brown2020}, 530B Megatron-Turing NLG \cite{Smith2022}, 540B PaLM \cite{Chowdhery2022}, or even larger sparsely activated models, have been well trained and deployed in many applications.
It is an urgent need for the community to truly understand the efficacy of early exiting for LLMs at larger scales,
which is indispensable for making early exiting a useful and practical option in complex scenarios that only sufficiently large LLMs can handle.

\paragraph{Challenges.}

The first and foremost question is how to train an early-exit LLM that is too large to fit into the memory of one single device (e.g.~GPU).
While state-of-the-art frameworks like 
Megatron-LM \cite{Shoeybi2019,Narayanan2021}, 
DeepSpeed \cite{Rasley2020DeepSpeedSO,Smith2022}, 
Alpa \cite{Zheng2022Alpa}, 
and many more, support training standard LLMs at large scales with data parallelism and model parallelism (including tensor, sequence, and pipeline parallelism), they do not provide native support for early-exit LLMs.
One particular challenge lies in pipeline parallelism \cite{Narayanan2019,Narayanan2021-PipeDream,Fan2021}, which partitions the model along the depth dimension into multiple pipeline stages, connected by limited point-to-point communication between devices; 
this seems to contradict training early-exit models, as the training objective is typically an aggregation of losses for multiple exits that are now located separately on different pipeline stages. 
Despite the necessity of pipeline parallelism in many scenarios, we are not aware of any implementation that supports training early-exit LLMs with it.

Moreover, training efficiency for early-exit generative LLMs requires special design,
since each early exit contains (at least) a large output embedding matrix that transforms hidden states into logits on the vocabulary,
which can constitute a non-trivial proportion of the whole model.
Naive implementation of early-exit LLM training can cause large computational overhead compared to standard LLM training.

Finally, with regard to autoregressive generation (where tokens are generated one by one, depending on previously generated tokens via the attention mechanism), 
naive implementation of early-exit inference is not compatible with KV caching, a standard technique of storing the keys and values of previously generated tokens at each layer.
Indeed, if the current token is generated via early exiting at some layer, then its KV caches in later layers are missing, which hinders the generation of future tokens.
Given that KV caching is enabled by default in most cases, the efficacy of early exiting for autoregressive generation might be questionable if its conflict with KV caching is not well resolved.

\paragraph{Main contributions.}

We propose \sys, a system for large-scale training and inference of {e}arly-{e}xit (EE) LLMs with 3D parallelism, which is designed to tackle the aforementioned challenges. 
\sys is built upon Megatron-LM \cite{Shoeybi2019,Narayanan2021,Smith2022}, and augments it with various functionalities for early exiting.
In addition to compatibility with existing functionalities of 3D parallelism provided by Megatron-LM, 
\sys also implements a variety of algorithmic innovations,
including a lightweight method that facilitates backpropagation for the early-exit training objective through pipeline stages,
various techniques of leveraging idle resources in the original pipeline schedule for computation related to early-exit layers,
and two approaches of early-exit inference that are compatible with KV caching. 
Implementation of \sys has been well optimized for maximum training and inference efficiency.
Our analytical and empirical study confirms that, 
with negligible computational overhead caused by early-exit layers during training with 3D parallelism, 
one obtains an early-exit LLM that generates tokens with adaptive token-wise exit selection, achieving outstanding inference speedup without compromising output quality.
With \sys, it is now possible to train and deploy early-exit LLMs that are as large as the maximum sizes of standard LLMs allowed by Megatron-LM, given the same amount of computational resources.
The source code for \sys can be found at \url{https://github.com/pan-x-c/EE-LLM}.

\begin{figure}[tbp]
\centering
\includegraphics[width=\columnwidth]{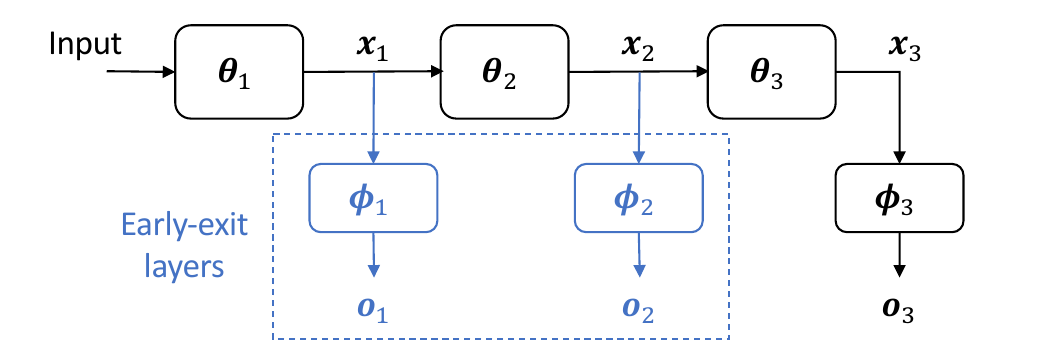}
\caption{The model architecture of an early-exit LLM.
Additional components compared to a standard LLM are highlighted in blue.
Each $\btheta_i$ represents a sequence of Transformer layers in the backbone of the LLM, with some additional modules in $\btheta_1$ for input processing.
Each $\bphi_i$ represents an early or final-exit layer that converts hidden states $\bx_i$ into output $\bo_i$, e.g.~logits for next-token prediction.
}
\label{fig:multi_exit_model}
\end{figure}

\paragraph{Organization.}
 
Section~\ref{sec:overview} provides a high-level overview of \sys,
while Sections~\ref{sec:training} and~\ref{sec:inference} focus on training and inference, respectively.
The efficacy of \sys is validated by numerical experiments in Section~\ref{sec:experiments}.
The appendix includes extended preliminaries and related works,
additional experiments,
as well as more details about the training efficiency, advanced features, and implementation of \sys.

\section{An overview of \sys}
\label{sec:overview}

This section provides an overview of our system for scaling up sizes, training and inference of early-exit LLMs, with flexible configurations and a wide range of functionalities.

\paragraph{Model architectures.}

We implement in \sys an early-exit Transformer architecture, which is built upon the generative pre-training (GPT) Transformer architecture \cite{Radford2018ImprovingLU,Radford2019LanguageMA} originally implemented in Megatron-LM. 
\sys allows users to
(1)~specify arbitrary layers to add early exits to;
(2)~add trainable modules to each early-exit layer, e.g.~a multi-layer perceptron (MLP) or a complete Transformer layer, 
on top of the \emph{minimalistic} structure (with an output embedding matrix, plus an optional layer normalization module in front of it);
and (3)~choose to tie \cite{Press2017,Schuster2022,Varshney2023AcceleratingLI} or untie the input and output embedding matrices of all early/final-exit layers.
Each option has its own pros and cons, as will be discussed in later sections.
With this in mind, \sys has been designed to cover a wide range of configurations, 
so that users can easily try them out and choose the most suitable ones for their own use cases.

\paragraph{Training.}

\sys contains the essential functionalities for training early-exit LLMs, which tackle the main challenges outlined in Section~\ref{sec:introduction}, i.e.~how to train with 3D parallelism while minimizing the computational overhead caused by early-exit layers.
In addition to substantial engineering efforts for compatibility with existing functionalities in Megatron-LM,
we design and implement a simple yet effective algorithm that facilitates pipeline parallelism with multiple early/final-exit training losses located on different pipeline stages, which is not possible in standard pipeline parallelism implemented by Megatron-LM or other frameworks.
Moreover, our analytical and empirical study shows that training an early-exit LLM with \sys is almost as efficient as training a standard LLM, in terms of training time and peak GPU memory. 
This is achieved by various performance optimizations that we design and implement in \sys, especially the ones that leverage idle computational resources in standard pipeline parallelism.
Finally, \sys contains some advanced features for fine-grained control or optimization of the training process, 
including the option of changing early-exit loss weights during training, 
and a novel method of further improving resource utilization by filling pipeline bubbles with useful computation.

\paragraph{Inference.}

We design and implement two methods to tackle the major challenge of early-exit LLM inference for autoregressive generation, 
namely the conflict between early exiting and KV caching (as explained in Section~\ref{sec:introduction}).
One method is based on KV recomputation, which runs the forward pass with a batch of recent tokens when generating each token.
%
The other method is based on a novel form of pipeline parallelism, which parallelizes the forward pass of the current token at a certain pipeline stage with some KV-related computation of previous tokens at later stages.

\section{Training}
\label{sec:training}

We first present in Section~\ref{subsec:bp_through_pipeline_stages} the essentials of scaling up early-exit LLM training with 3D parallelism, 
including a novel approach to executing backpropagation for the early-exit training objective through pipeline stages.
Then, we analyze the training efficiency achieved by \sys in Section~\ref{subsec:training_efficiency_brief}, 
and explore some advanced features in Section~\ref{subsec:additional_features}.

\subsection{Backpropagation through pipeline stages}
\label{subsec:bp_through_pipeline_stages}

The standard objective function for training an early-exit model is a weighted sum of losses at early and final exits. 
More formally, to train an early-exit LLM with $N$ exits (including the final output), we aim to solve
\begin{equation}
\min \quad \loss \coloneqq \sum_{i \in [N]} w_i \loss_i^{\mathsf{exit}},
\label{eq:multi_exit_training_loss}
\end{equation}
where $[N] = \{1, 2, \dots, N\}$, and each $\loss_i^{\mathsf{exit}}$ is a standard loss function for LLM pre-training (e.g.~negative log-likelihood of next-token prediction), calculated with outputs from the $i$-th exit.
The loss weights $\{w_i\}$ are hyperparameters specified by the user.

Our implementation for optimizing the loss function in Eq.~\eqref{eq:multi_exit_training_loss} is compatible with all types of parallelism in Megatron-LM.
Indeed, with some engineering efforts, existing functionalities in Megatron-LM for data and tensor/sequence parallelism are directly applicable.
The major challenge lies in {pipeline parallelism}, since it is not immediately clear how to calculate gradients for Eq.~\eqref{eq:multi_exit_training_loss} via backpropagation through pipeline stages.
In a single-GPU scenario with vanilla PyTorch, one simply needs to define \verb|loss| as the weighted sum of losses at all exits, and then run \verb|loss.backward()| for gradient calculation.
This is not feasible with pipeline parallelism, since losses $\{\loss_i^{\mathsf{exit}}\}$ are now located on different GPUs, and there is only limited P2P communication between each pair of adjacent stages.
On the other hand, Megatron-LM only supports backpropagation for a single loss function defined in the last stage.

\subsubsection{Methodology}
\label{subsubsec:Methodology}

To tackle this challenge, we propose a lightweight algorithm that instructs each pipeline stage to calculate the desired gradients correctly, without any additional communication overhead between stages.
To explain our method, let us first re-write the training objective defined in Eq.~\eqref{eq:multi_exit_training_loss} as 
$\loss = \sum_{i \in [K]} \loss_i$,
where $K$ represents the number of pipeline stages, and each $\loss_i$ is itself a weighted sum of one or multiple early/final-exit losses within Stage $i$.\footnote{Obviously, the objective in Eq.~\eqref{eq:multi_exit_training_loss} is a special case of this formulation. 
In fact, our analysis in the following allows each $\loss_i$ to be a general objective function defined locally in Stage $i$.
}
Consider one data sample $\bx$ for simplicity, and each loss function is calculated with $\bx$, i.e.~$\loss_i = \loss_i(\bx)$;
in addition, let $\bx_i$ be the hidden states that Stage $i$ calculates and sends to its next stage during the forward step.
Then, during the backward step, Stage $i$ receives some gradient tensor $\bg_i$ from Stage $i+1$, defines some \emph{auxiliary loss} $\lossaux_i$, and performs usual backward computation for $\lossaux_i$.
The auxiliary losses $\{\lossaux_i\}_{i \in [K]}$ and gradient tensors $\{\bg_i\}_{i \in [K-1]}$ are defined inductively:
\begin{subequations}
\label{eq:def_auxiliary_loss}
\begin{align}
\lossaux_{K} &\coloneqq \loss_K; \quad\text{for  } i = K-1, K-2, \dots, 1, \\
\lossaux_{i} &\coloneqq \loss_i + \langle \bg_i, \bx_i \rangle, 
\quad\text{where}\quad 
\bg_i \coloneqq \frac{\partial \lossaux_{i+1}}{\partial \bx_i}.
\end{align}
\end{subequations}
Intuitively, the linear term $\langle \bg_i, \bx_i \rangle$, i.e.~the sum of entrywise product between $\bg_i$ and $\bx_i$, summarizes information about the gradients of all losses located in later stages.
Note that $\bg_i$ is regarded by Stage $i$ as a constant tensor, and no gradient is calculated with respect to it.
A visualization of this process can be found in Figure~\ref{fig:backprop_pipeline}.
It has the same P2P communication scheme as in the case of training a standard LLM with pipeline parallelism;
the only difference is how each gradient tensors $\bg_i$ is defined locally in Stage $i+1$.
In the following, we prove that the proposed method leads to correct gradient calculation for the training objective $\loss$. 


\begin{figure}
\centering
\includegraphics[width=.95\columnwidth]{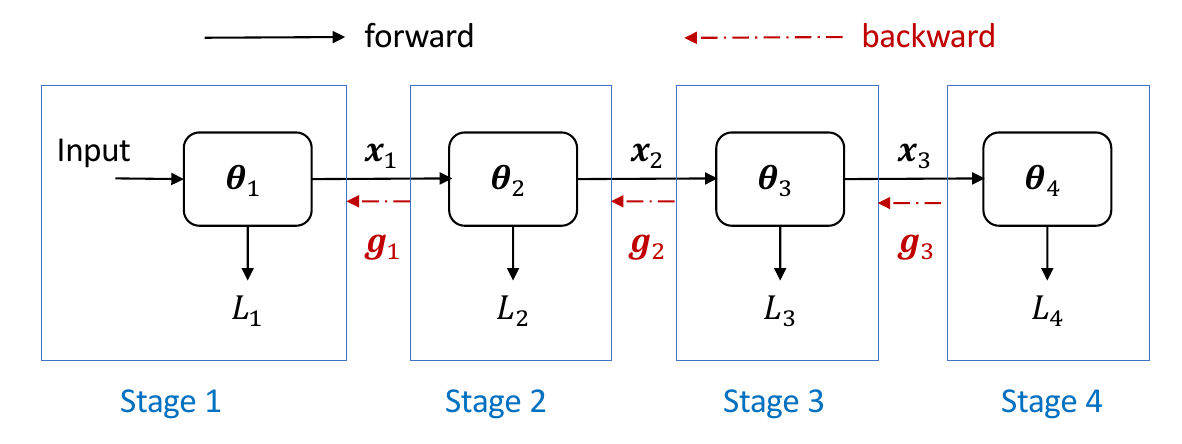}
\vspace{-0.5em}
\caption{The backpropagation process
for an early-exit model partitioned into four pipeline stages.}
\label{fig:backprop_pipeline}
\end{figure}

\subsubsection{Rationale}

Let us first prove the correctness of our solution under the assumption that there is no tied/shared parameter across pipeline stages, just for simplicity;
we will see very soon that this assumption is not essential and can be safely discarded.

\begin{proposition}
\label{prop:auxiliary_loss_bp}
Suppose that there is no tied parameter across pipeline stages, and consider the auxiliary losses defined in Eq.~\eqref{eq:def_auxiliary_loss}.
Then, for any $i \in [K]$ and any model parameter or activation tensor $\bz$ in Stage $i$, it holds that 
\begin{equation}
\label{eq:auxiliary_loss_prop}
\frac{\partial \lossaux_i}{\partial \bz} = \frac{\partial (\sum_{j=i}^{K} \loss_j)}{\partial \bz}.
\end{equation}
\end{proposition}

Notice that for any model parameter $\bz$ in Stage $i$, one has $\partial \loss_j / \partial \bz = \bm{0}$ for any $j < i$, due to the sequential structure of early-exit LLMs (or other deep neural networks).
Combining this with $\loss = \sum_{i \in [K]} \loss_i$ and Eq.~\eqref{eq:auxiliary_loss_prop} yields
\begin{equation*}
\frac{\partial \lossaux_i}{\partial \bz} = \frac{\partial (\sum_{j=i}^{K} \loss_j)}{\partial \bz} = \frac{\partial \loss}{\partial \bz},
\end{equation*}
implying correctness of gradient calculation for $\loss$. 

\begin{proof}[Proof of Proposition~\ref{prop:auxiliary_loss_bp}]
The claim of Eq.~\eqref{eq:auxiliary_loss_prop} is obviously true for the base case $i=K$, by definition of $\lossaux_K = \loss_K$.
Let us prove by induction for the remaining stages.
Suppose that Eq.~\eqref{eq:auxiliary_loss_prop} holds true for Stage $i+1$, namely 
${\partial \lossaux_{i+1}}/{\partial \bz} = {\partial (\sum_{j=i+1}^{K} \loss_j)}/{\partial \bz}$.
To prove Eq.~\eqref{eq:auxiliary_loss_prop} for Stage $i$, first note that by definition of $\bg_i$, we have
\begin{equation*}
\bg_i = \frac{\partial \lossaux_{i+1}}{\partial \bx_i} = \frac{\partial (\sum_{j=i+1}^{K} \loss_j)}{\partial \bx_i}.
\end{equation*}
Then, for any model parameter or activation tensor $\bz$ in Stage $i$, the following holds:
\begin{align*}
\frac{\partial \lossaux_i}{\partial \bz}
&= \frac{\partial (\loss_i + \langle \bg_i, \bx_i \rangle)}{\partial \bz}  
= \frac{\partial \loss_i}{\partial \bz} + \frac{\partial \langle \bg_i, \bx_i \rangle}{\partial \bx_i } \frac{\partial \bx_i}{\partial \bz} \\
&= \frac{\partial \loss_i}{\partial \bz} + \bg_i \frac{\partial \bx_i}{\partial \bz}   
= \frac{\partial \loss_i}{\partial \bz} + \frac{\partial (\sum_{j=i+1}^{K} \loss_j)}{\partial \bx_i} \frac{\partial \bx_i}{\partial \bz} \\  
&= \frac{\partial \loss_i}{\partial \bz} + \frac{\partial (\sum_{j=i+1}^{K} \loss_j)}{\partial \bz}   
= \frac{\partial (\sum_{j=i}^{K} \loss_j)}{\partial \bz}.
\end{align*}
The above lines simply follow the definition of $\lossaux_i$ and the chain rule.
This concludes our proof of Eq.~\eqref{eq:auxiliary_loss_prop} for Stage $i$, and thus our proof for the proposition.
\end{proof}

Let us move on to relax the assumption.
In the broader scenario with tied model parameters, e.g.~word embedding matrices \cite{Press2017}, across pipeline stages, 
gradient calculation via backpropagation is equivalent to the following two-step procedure: 
(1)~compute gradients \emph{as if} all parameters are untied, then 
(2)~sum up and synchronize gradients for tied parameters via all-reduce operations.
Hence our proposed auxiliary-loss approach, when applied to the first part of this two-step procedure, is still valid.

\begin{figure*}
\centering
\vspace{-0.5em}
\includegraphics[width=.97\textwidth]{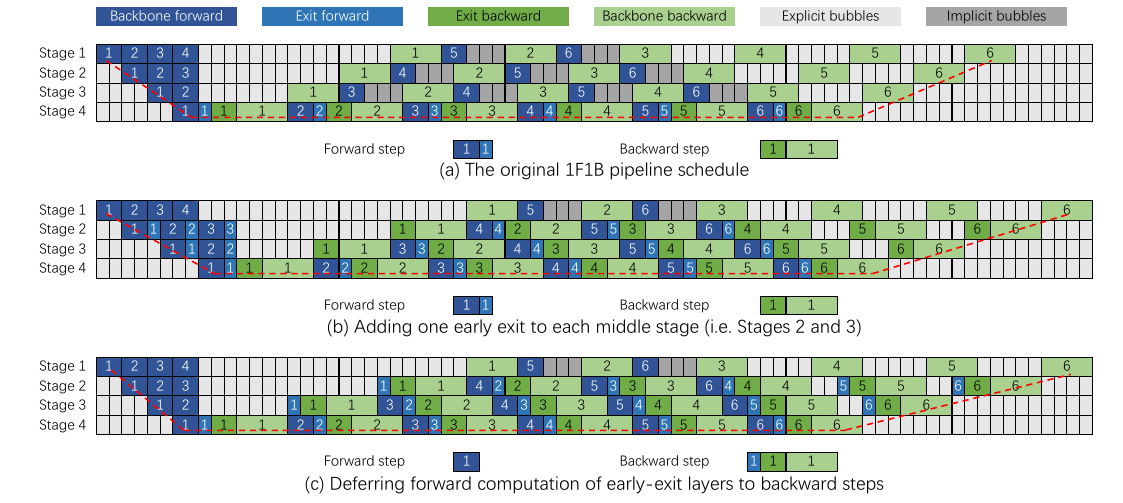}
\vspace{-1em}
\caption{One iteration of the 1F1B pipeline schedule, in a setting with $P=4$ pipeline stages and $M=6$ microbatches per batch.
At the top of this figure,
``Backbone forward/backward'' stands for computation of Transformer layers on the backbone, 
while ``Exit forward/backward'' stands for computation of early-exit or final-exit layers.
The number in each block denotes the index of the corresponding microbatch.
Critical paths are marked by dashed red lines.
From Figure~(a) to~(b), additional ``Exit forward/backward'' blocks are added, due to the introduction of early exits to middle stages.
From Figure~(b) to~(c), the order of computation is slightly adjusted for the purpose of reducing memory usage.
For clarity, we ignore computation related to the input embedding layer, and P2P communication latency between pipeline stages.
}
\vspace{-1em}
\label{fig:pipeline_schedules}
\end{figure*}

\subsubsection{Integration with the 1F1B schedule}

We have shown how to modify the forward and backward steps for \emph{each} microbatch, in order to execute backpropagation of the early-exit training loss through pipeline stages.
In principle, this lightweight modification can be integrated with general pipeline schedules of forward and backward steps for \emph{multiple} microbatches.
The classical 1F1B (one-forward-one-backward) schedule, also called PipeDream-Flush \cite{Narayanan2019,Fan2021,Narayanan2021}, achieves a good balance of algorithmic simplicity, training time, memory usage and communication latency, in comparison to other schedules such as GPipe \cite{Huang2019} (larger activation memory) or interleaved 1F1B \cite{Narayanan2021} (higher memory and communication requirements).
Hence for concreteness, we focus on the 1F1B schedule throughout this work.
Within one iteration of this schedule, each stage goes through a \emph{warm-up} phase (forward steps of the beginning microbatches), a \emph{steady} 1F1B phase, and a \emph{cool-down} phase (backward steps of the final microbatches).
A visualization can be found in Figure~\ref{fig:pipeline_schedules}(a).

\subsection{Training efficiency}
\label{subsec:training_efficiency_brief}

At first glance, one might expect that adding early exits to a standard LLM will incur a training overhead, in terms of time and memory, that is (at least) proportional to the number of additional model parameters.
Fortunately, this is not the case for training with pipeline parallelism.
We discover that computation related to the additional early exits can leverage idle resources in the original 1F1B schedule, 
thus causing \emph{negligible overhead to training time and zero overhead to peak GPU memory} under mild conditions.

To explain this, let us first identify three types of idle resources in the original 1F1B schedule.
\begin{itemize}
    \item \textbf{Explicit bubbles}, i.e.~the light gray areas in Figure~\ref{fig:pipeline_schedules}(a), during which GPUs are idle.
    \item \textbf{Implicit bubbles}, i.e.~the dark gray areas in Figure~\ref{fig:pipeline_schedules}(a), caused by load imbalance across pipeline stages.
    In particular, the last pipeline stage requires additional computation for the output layer.
    \item \textbf{Idle memory}, caused by unbalanced memory usage: earlier stages have to save the intermediate activations for more microbatches, and the first/last stage need to store an extra input/output layer, plus the corresponding gradients and optimizer states. As a result, the first stage is typically the bottleneck of peak memory usage.

\end{itemize}

Now, suppose that we choose $k$ middle stages and add one minimalistic early-exit layer to each of them.
As shown in Figure~\ref{fig:pipeline_schedules}(b),
by utilizing implicit bubbles,
computation of $k$ early exits will increase the training time per iteration by only $k \times (f_{\mathsf{EE}} + b_{\mathsf{EE}})$, 
where $f_{\mathsf{EE}}$ (resp.~$b_{\mathsf{EE}}$) represent the time for one forward (resp.~backward) pass of one microbatch for one early-exit layer.
Moreover, memory usage by model parameters for each middle stage will not exceed that of the first or last stage.
Deferring forward computation of early exits to backward steps, as shown in Figure~\ref{fig:pipeline_schedules}(c),
further decreases the memory overhead by early-exit logits in Stage $i \in [P]$ from $s \times b\times V \times (P - i + 1)$ to $s \times b\times V$, where
$s$ is the sequence length,
$b$ is the microbatch size, 
$V$ is the vocabulary size,
and $P - i + 1$ is the number of in-flight microbatches \cite{Korthikanti2022} for Stage $i$.
Putting these together, the peak memory usage across stages remains unchanged,
as long as $s \times b\times V$ is less than the activation memory of \emph{all} backbone Transformer layers within one stage for one microbatch.

Due to limited space, we defer detailed and formal analysis of training efficiency in broader settings to Appendix~\ref{sec:training_efficiency}.

\subsection{Advanced features}
\label{subsec:additional_features}

\sys incorporates some advanced features that can potentially improve the training process. 
One of them is the option of changing early-exit loss weights during training, just like learning rate or other hyperparameters.
Another one is a novel method of improving resource utilization by filling explicit bubbles 
with \emph{partial} forward and backward computation for \emph{additional microbatches}, which is visualized in Figure~\ref{fig:pipeline_fill_bubbles}.
One can prove formally that such additional computation provides useful gradient information for optimizing the early-exit training objective, without increasing the training time per iteration.
More details of both features can be found in Appendix~\ref{sec:appendix_advanced_features}.


\begin{figure}[tbp]
\centering
\includegraphics[width=\columnwidth]{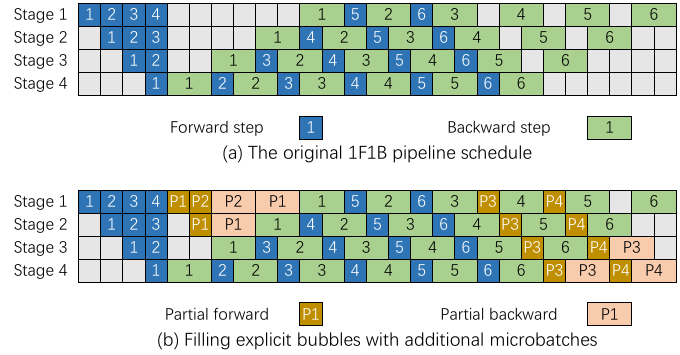}
\vspace{-1.5em}
\caption{
The proposed method of filling bubbles with additional microbatches.
In this example, P1 and P2 go through the forward and backward passes for the first few stages, 
while P3 and P4 go through the full forward pass, followed by the backward pass for the last few stages.
}
\label{fig:pipeline_fill_bubbles}
\end{figure}

\section{Inference}
\label{sec:inference}

This section first explains the major challenge faced by early-exit LLM inference for autoregressive generation, 
and a few recent attempts to resolve it.
Then, we introduce a novel solution based on a new type of pipeline parallelism.
Throughout this section, we focus on the latency-oriented setting, with a batch size of 1 during sequence generation.

\paragraph{Main challenge: KV caching.}

Accelerating inference with early exiting is, in principle, orthogonal to and compatible with many other common techniques of acceleration, such as kernel fusion, FlashAttention \cite{dao2022flashattention}, quantization \cite{Yao2022ZeroQuantEA,Xiao2023SmoothQuantAA}, among others \cite{Kim2023FullSO}.
For autoregressive generation, one exception is KV caching, i.e.~saving keys and values of all attention layers for previously generated tokens, which reduces redundant computation at the cost of higher memory usage.
This is contradictory to vanilla early-exit inference: if the current token is generated via early exiting, then its KV caches in later layers are missing, which hinders the generation of future tokens that go beyond the exiting layer of the current token.
This challenge has been well recognized in the literature, and several approaches have been recently proposed to resolve it,
including state propagation \cite{Elbayad2020,Li2021AcceleratingBI,Schuster2022}, SkipDecode \cite{DelCorro2023}, and synchronized parallel decoding \cite{Bae2023,tang2023deed}.
See Appendix~\ref{sec:related_works} for more discussion on these methods.

A variant of the last method, which we call \emph{KV recomputation}, is implemented in \sys. 
In this approach, we maintain a list of the most recent tokens that have missing KV caches in deep layers due to early exiting. 
During each forward pass, we include these early-exit tokens in the current forward pass, which allows for direct recomputation of the KV caches for these tokens and thus avoids the issue of missing KV caches.
It relies on the batching effect of GPU computation for early-exit acceleration,  
yet might not achieve any acceleration on other hardware platforms, due to its high theoretical complexity.

\begin{figure}[tbp]
\centering
\includegraphics[width=\columnwidth]{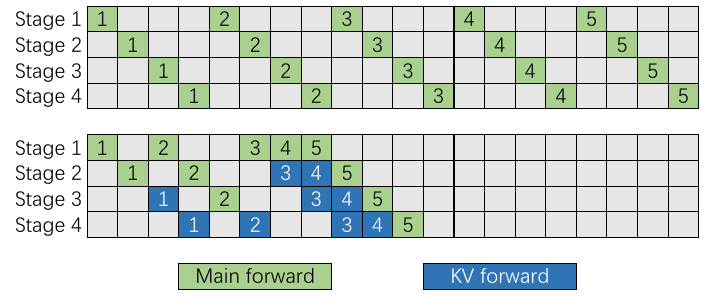}
\vspace{-1.0em}
\caption{
Standard full-model inference (top) and
our pipeline-based early-exit inference (bottom).
Numbers in the blocks denote the tokens within one generated sequence.
For simplicity of visualization,
we assume here that (1)~each early exit is located at the end of some pipeline stage, and
(2)~the latency for generating each token is the same (while in practice, generating the first token via the prefilling phase usually takes longer than generating another token during the decoding phase).
}
\label{fig:inference_pipeline}
\end{figure}

\paragraph{New solution: pipeline parallelism.}

We propose a novel type of pipeline parallelism to tackle the aforementioned challenge.
The key idea is that, in the process of inference with multiple pipeline stages, the following two processes run \emph{in parallel} whenever the model decides to do early exiting for the current token at a certain exit:
\begin{itemize}
    \item The generated token is sent back to the first stage, and the forward pass for generating the next token is started immediately;
    \item The full-model forward pass of the current token is continued from the exiting layer, which fulfills its KV caches in all later layers.
\end{itemize}
See Figure~\ref{fig:inference_pipeline} for a visualization of this approach.
Although each token essentially goes through a forward pass of the full model, the computation after the exiting layer is \emph{parallelized} with the computation of later tokens, which is how acceleration is achieved in this approach.
It can be checked that the inference latency for generating one token at a certain exit matches exactly the time needed for the forward computation before returning an output at that exit,
unless the selected exit is located in the middle of the first pipeline stage, in which case generation of the next token has to wait until the forward pass of the first stage for the current token is completed.
Note that this is true not just in practice but also for the \emph{theoretical} time complexity, without relying on the batching effect of GPU computation like KV recomputation does.
An implementation of this pipeline-based inference method is provided in \sys.
One potential limitation of the proposed method is that it requires multiple devices to facilitate pipeline parallelism, although parallelism within a single GPU or other device might be possible with more advanced implementation.

\section{Experiments}
\label{sec:experiments}

This section provides an empirical evaluation of the training and inference efficiency achieved by \sys.

\subsection{Training}
\label{subsec:exp_training}

In the following experiments, we empirically investigate the convergence of training early-exit models with \sys,
as well as the training efficiency of \sys for early-exit LLMs up to an unprecedented scale of 30B.
This scale is only limited by the hardware resources available to us, namely an 8-node cluster with 8 Nvidia A100-80GB GPUs in each node and hence 64 GPUs in total.
We use a random subset of the pre-training data provided by Data-Juicer \cite{DJdata,Chen2023}.
Experiments for models of the same size use the same subset and order of data.
In all cases, we use the Adam optimizer \cite{Kingma2014AdamAM} with $\beta_1 = 0.9, \beta_2 = 0.95, \epsilon = 10^{-8}$,
and a cosine schedule for the learning rate, with a maximum value of $3 \times 10^{-4}$.

\paragraph{Convergence of training losses.}

We first consider a 1.3B GPT Transformer with 24 layers, add one minimalistic early-exit layer without layer normalization to the 1/4 depth and the other to the 1/2 depth, set their early-exit loss weights to 1/4 and 1/2 respectively (while the final-exit loss has a weight of 1), and tie all input and output embedding matrices.
A standard LLM of the same architecture is trained using the same hyperparameters and pre-training data.
We further train a 7B early-exit model with 32 layers using similar configurations, except that early-exit loss weights are set to 0.1 and 0.2, and all embedding matrices are untied.
A standard 7B model is trained similarly.
The batch size and sequence length are both set to 2048.

\begin{figure}
\centering
\begin{subfigure}{.93\columnwidth}
    \centering
    \includegraphics[width=\textwidth]{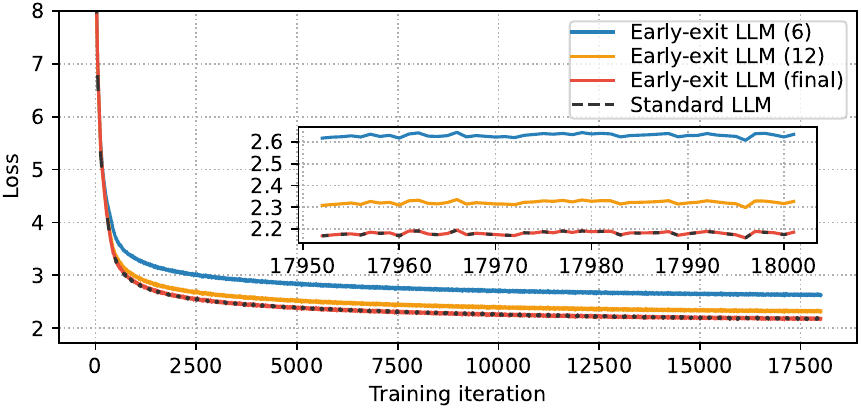}
    \caption{1.3B, 24 layers}
\end{subfigure}
\begin{subfigure}{.93\columnwidth}
    \centering
    \includegraphics[width=\textwidth]{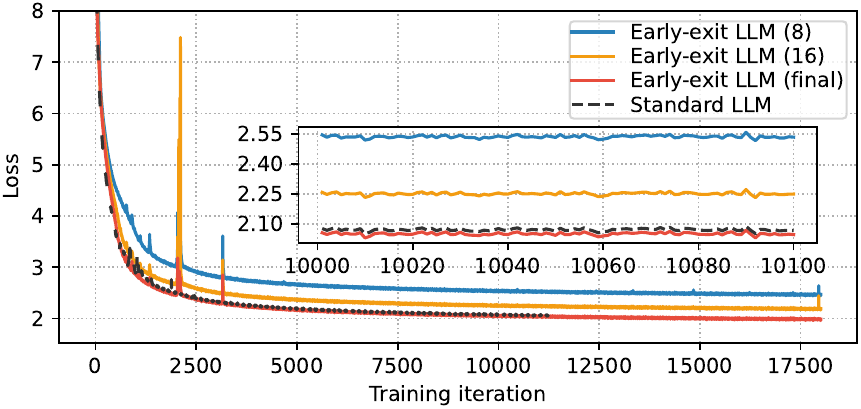}
    \caption{7B, 32 layers}
\end{subfigure}
\caption{Convergence of early-exit/final-exit training losses.
Each curve is annotated with the index of the Transformer layer that the corresponding exit is connected to.
}
\label{fig:training_loss}
\end{figure}

Figure~\ref{fig:training_loss} shows the convergence of early and final-exit training losses, i.e.~negative log-likelihood of next-token prediction, for both standard and early-exit LLMs.
All loss curves decay at a similar pace, and unsurprisingly, the early-exit losses are slightly higher than the final-exit loss for each model.
Interestingly, the final-exit loss curve of each early-exit model is close to (or even slightly below) that of the standard model,
suggesting that optimizing for early-exit losses might not hurt the full-model output in our setting.
We also observe more spikes in the loss curves for the 7B models than for the 1.3B models, 
possibly because
(1)~we choose to untie the embedding matrices and use smaller early-exit loss weights, both of which incur weaker regularization for the training process; and
(2)~layer normalization, which is known to stabilize the training of LLMs, is not included in the minimalistic early-exit layers of our 7B model.
Empirical results for the convergence of training losses under more general early-exit configurations can be found in Appendix~\ref{subsec:exp_training_losses_more_config}.

\begin{figure}
\centering

\begin{subfigure}{\columnwidth}
\centering
\includegraphics[width=.46\textwidth]{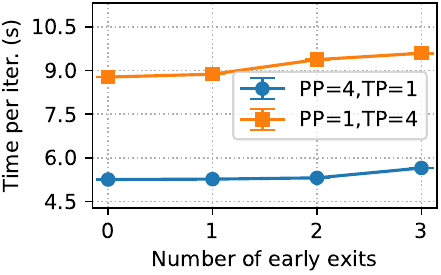}%
\includegraphics[width=.46\textwidth]{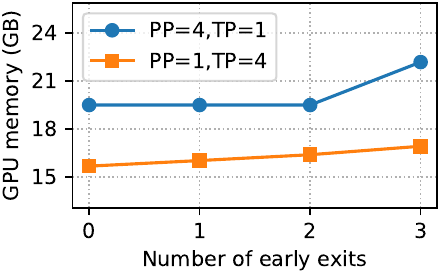}
\caption{1.3B}
\end{subfigure} \\

\begin{subfigure}{\columnwidth}
\centering
\includegraphics[width=.46\textwidth]{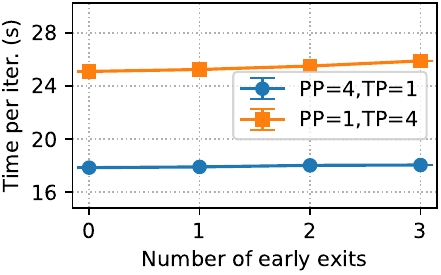}%
\includegraphics[width=.46\textwidth]{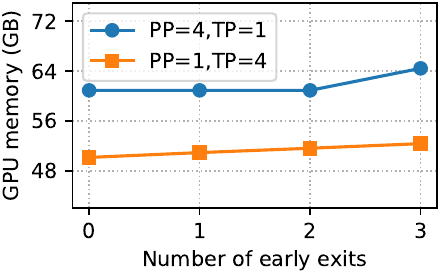}
\caption{7B}
\end{subfigure} \\

\begin{subfigure}{\columnwidth}
\centering
\includegraphics[width=.46\textwidth]{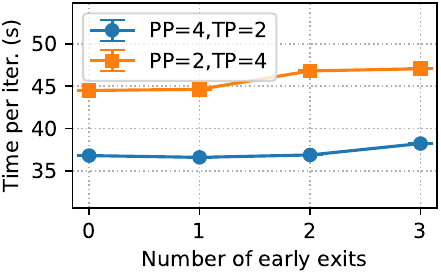}%
\includegraphics[width=.46\textwidth]{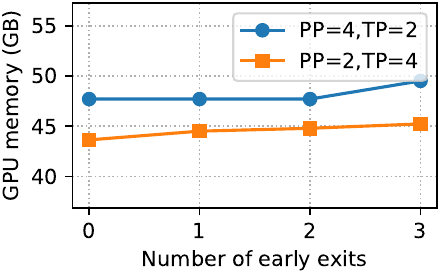}
\caption{13B}
\end{subfigure} \\

\begin{subfigure}{\columnwidth}
\centering
\includegraphics[width=.46\textwidth]{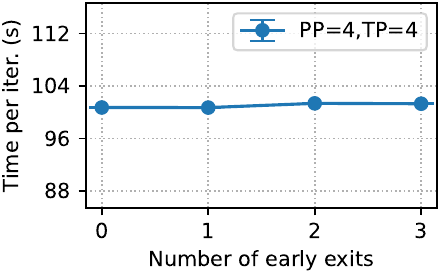}%
\includegraphics[width=.46\textwidth]{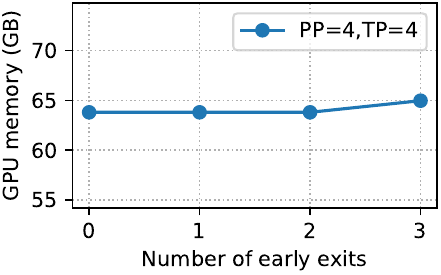}
\caption{30B}
\end{subfigure}
\vspace{-1.5em}
\caption{Training time per iteration and peak GPU memory vs.~the number of added early exits.
{Note that wall-clock time can be impacted by other workloads on the same GPU cluster when our experiments were conducted,
which inevitably causes perturbations to our numerical results.}
}
\label{fig:exp_training_efficiency}
\end{figure}

\paragraph{Training efficiency.}

Starting with a standard GPT Transformer of size ranging from 1.3B to 30B,
we increase the number of early exits from 0 to 3. 
Minimalistic early exits are added one by one to specific locations in the following order:
(1) to the 1/4 depth;
(2) to the 1/2 depth;
(3) to the hidden states right before the first Transformer layer, which is always located in the first pipeline stage.
Based on the performance optimizations proposed in Appendix~\ref{sec:training_efficiency}, when an early exit is inserted into the middle of two layers located on two consecutive pipeline stages, we always add it to the beginning of the latter stage.
We set the global batch size to 2048, microbatch size to 2 (for 1.3B and 7B models) or 1 (for 13B and 30B models), and sequence length to 2048.
The data parallelism degree is fixed at 4,
while tensor (TP) and pipeline (PP) parallelism degrees take various values.

The empirical results illustrated in Figure~\ref{fig:exp_training_efficiency}
matches our analytical study in Section~\ref{subsec:training_efficiency_brief}.
In particular, training time increases with the number of added early exits, 
but at a slower rate when pipeline parallelism is enabled, thanks to the proposed utilization of implicit bubbles.
Without pipeline parallelism, peak GPU memory increases with the number of early exits;
on the other hand, with the pipeline parallelism degree set to 4, peak memory remains unchanged as early exits are added to middle pipeline stages, and only increases when the last early exit is added to the first stage. 

\subsection{Inference}
\label{subsec:exp_inference}

In the experiments below, we verify the downstream performance of the pipeline-based approach proposed in Section~\ref{sec:inference}, with the number of pipeline stages set to 4. 
A server with 4 Nvidia A100-40GB GPUs is used for inference.
We use the 1.3B and 7B early-exit LLMs from the previous experiment on convergence of training losses in Section~\ref{subsec:exp_training},
which have been pre-trained from scratch using 300B and 150B tokens respectively,
without further fine-tuning or alignment.
Since designing the decoding method or exit condition is not our focus,
we simply adopt greedy decoding and a basic confidence-based exit condition: 
at each early exit, if the maximum probability of next-token prediction is above a pre-defined threshold, then the model chooses the most likely token and moves on to generate the next one.
The trade-off between output quality and speed is controlled by this threshold.
When the threshold is set to 1, early-exit layers are disabled, so that the time complexity of a full-model forward pass matches that of a standard LLM\footnote{We observed empirically that the actual wall-clock latency between these two cases might differ by up to 2\%, which is negligible compared to the early-exit speedup.}.
Inference latency in this case is used as the baseline for calculating relative speedup.
For benchmarking, we evaluate our models with HELM \cite{helm} in six tasks:
BoolQ \cite{clark2019boolq}, TruthfulQA \cite{truthfulqa}, NaturalQuestions open-book and closed-book \cite{naturalquestions}, XSUM \cite{xsum}, and CNN/DailyMail.
The first four tasks are question-answering tasks, while the last two are summarization tasks.
We use EM (ratio of exact match with the correct answers) as the metric for BoolQ and TruthfulQA, 
F1 for NaturalQuestions open-book/closed-book,
and ROUGE-L (a measure of similarity to the reference summaries) for XSUM and CNN/DailyMail.

\begin{figure}[tbp]
\centering
\includegraphics[width=.5\columnwidth]{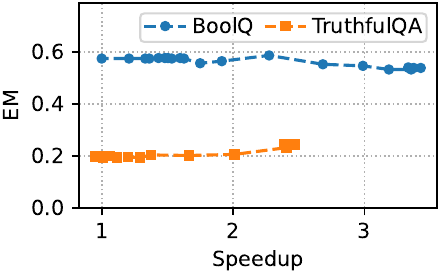}%
\includegraphics[width=.5\columnwidth]{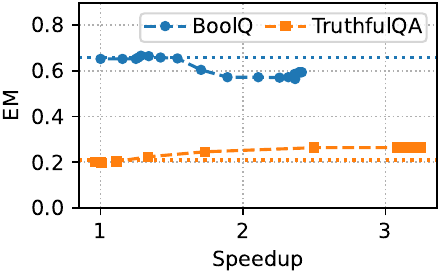}
\includegraphics[width=.5\columnwidth]{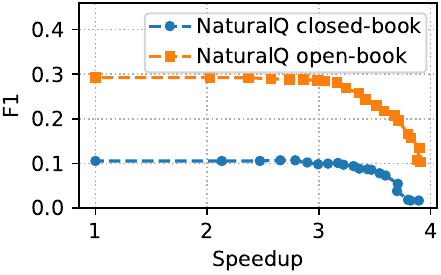}%
\includegraphics[width=.5\columnwidth]{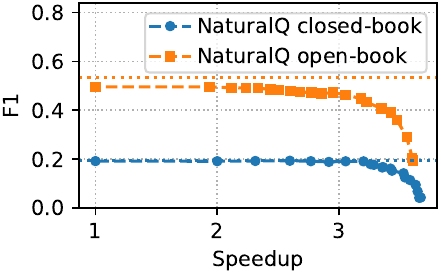}
\begin{subfigure}{.5\columnwidth}
    \centering
    \includegraphics[width=\textwidth]{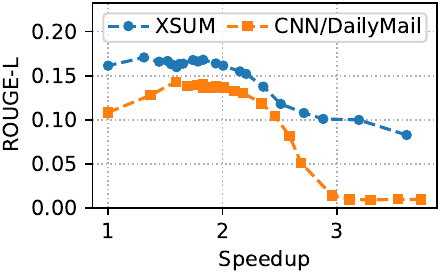}
    \caption{1.3B}
\end{subfigure}%
\begin{subfigure}{.5\columnwidth}
    \centering
    \includegraphics[width=\textwidth]{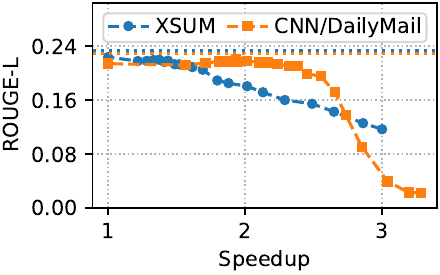}
    \caption{7B}
\end{subfigure}
\caption{Evaluation scores and relative speedup with our early-exit models. 
For each curve, speedup increases as the confidence threshold decreases from left to right.
The right column also includes horizontal dotted lines that represent the performance of a 7B standard LLM trained under the same configurations as those for our 7B early-exit model.
}
\label{fig:eval_em_rouge}
\end{figure}

Figure~\ref{fig:eval_em_rouge} demonstrates the scores and speedup for pipeline-based inference with varying confidence thresholds.
Encouragingly, in many cases, inference with early exiting achieves $2\times$ or higher speedup compared to full-model inference, with comparable or even better evaluation scores.
Table~\ref{tab:example_inference_speed} provides example texts generated by our 7B early-exit model, showing that speedup can be achieved with minor or no impact on the resulting sequence.
Table~\ref{tab:example_inference_prob} further demonstrates the confidence at each exit for each token, confirming the existence of easy tokens that can be predicted correctly with high confidence at early exits. 
See also Appendix~\ref{subsec:exp_two_inference_methods} for an empirical comparison between the pipeline-based method and KV recomputation, which are shown to perform similarly in our setting.
It is worth noting that performance of early-exit inference might be improved in many ways,
e.g.~using early exits of different structures,
other decoding mechanisms and exit conditions,
or simply more extensive and sufficient pre-training.
We also conjecture that early exiting can, in principle, bring higher speedup for larger LLMs, as expressivity of early exits grows with the model capacity, which allows them to make confident and accurate predictions for a larger proportion of tokens.

\section{Conclusions}
\label{sec:conclusions}

We have introduced \sys, a system for large-scale training and inference of early-exit LLMs with 3D parallelism.
For training, we have presented how to execute backpropagation of the early-exit training objective across pipeline stages,
performance optimizations for minimizing the computational overhead compared to training standard LLMs,
and advanced features for fine-grained control and optimization of the training process.
For inference, we have introduced our design and implementation of two inference methods, one based on KV recomputation and the other based on a new type of pipeline parallelism, both of which are compatible with KV caching for autoregressive generation.
Along the way, we have discovered some interesting chemical reactions between early exiting and pipeline parallelism,
which is likely because it is both \emph{along the depth dimension} that an early-exit model executes forward computation selectively and gets partitioned into pipeline stages.
We hope that \sys will be a helpful tool for future study and applications of early-exit LLMs at scales as large as those of standard LLMs.
It is worth noting that many ideas presented in this work can be extended to more general neural network architectures, deep learning frameworks, hardware platforms, or other broader settings.
It would be exciting to see further developments in these aspects.

\section*{Acknowledgements}
We would like to thank the reviewers for their constructive and valuable feedback that helps improve this work.

\section*{Impact Statement}
This paper presents work whose goal is to advance techniques for training and inference of early-exit large language models.
Therefore, it bears potential societal consequences similar to those of prior literature on the technical aspects of LLM training and inference, e.g.~misuse of the resulting pre-trained LLMs.
Other than that, we see nothing that we feel must be specifically highlighted here.


\bibliography{refs_multi_exit}
\bibliographystyle{icml2024}

\newpage
\appendix
\onecolumn

\section*{Structure of the appendix}

This appendix is organized as follows.
Appendix~\ref{sec:training_efficiency} supplements a detailed and formal analysis of training efficiency,
while Appendix~\ref{sec:appendix_experiments} provides additional experiments and empirical results.
Appendices~\ref{sec:appendix_advanced_features} and ~\ref{sec:implementation}
include details about advanced features and implementation of \sys.
Extended preliminaries and related works can be found in Appendices~\ref{sec:preliminaries} and~\ref{sec:related_works}.

\section{Training efficiency}
\label{sec:training_efficiency}

This section is an extended version of Section~\ref{subsec:training_efficiency_brief}.
In the following, we analyze the training time and peak GPU memory of training an early-exit model with pipeline parallelism, and propose some performance optimizations.
We refer readers to the literature (e.g.~the Megatron-LM series \cite{Shoeybi2019,Narayanan2021,Korthikanti2022}) for thorough study of training efficiency with 3D parallelism for standard LLMs without early exits; 
we will use that as a baseline, and focus on the additional computational overhead caused by early exits.

Let us first identify the major sources of low resource utilization in the original 1F1B pipeline schedule for training a standard LLM, which lays the foundation for our analysis later.
\begin{itemize}
    \item \textbf{Explicit bubbles,}
    i.e.~light gray areas in Figure~\ref{fig:pipeline_schedules}(a), during which GPUs are idle.
    This is the most notable and well recognized source of low resource utilization in pipeline parallelism.
    \item \textbf{Implicit bubbles,} 
    i.e.~dark gray areas in Figure~\ref{fig:pipeline_schedules}(a).
    This is caused by load imbalance across pipeline stages, even though Transformer layers are evenly divided into stages in Megatron-LM\footnote{For simplicity, we do not consider other flexible ways of dividing an LLM into pipeline stages, as dividing Transformer layers evenly remains one of the most practical and robust options in practice.
    We also do not consider the case where the input/output embedding matrix occupies a separate pipeline stage, although our techniques for training an early-exit LLM can be generalized to this case in principle.}.
    In particular, the first stage has the additional computation for the input embedding layer, 
    and more importantly, the last pipeline stage has the additional computation for the output logits (via the output embedding layer) as well as the training loss.
    For LLMs, these additional computational costs are not negligible, primarily due to large vocabulary sizes.
    \item \textbf{Idle memory.}
    Memory usage is also unbalanced across pipeline stages:
    earlier stages in the 1F1B schedule have to save the intermediate activations for more microbatches,
    and the first/last stage has to save an extra input/output embedding layer, plus the corresponding gradients and optimizer states.
    As a result, the first stage is typically the bottleneck of peak memory usage, while later stages have idle memory \cite{Korthikanti2022}.
\end{itemize}

\begin{remark}
In our analysis, we assume no activation recomputation \cite{chen2016,Korthikanti2022} for simplicity. 
For clarity, we also ignore the P2P communication latency between pipeline stages, which is generally not the major concern for the efficiency of pipeline parallelism.
\end{remark}

\subsection{Utilization of idle resources}
\label{subsubsec:util_idle_resources}

At first glance, one might expect that adding early exits to an LLM will incur a training overhead, in terms of time and memory, that is (at least) proportional to the number of additional model parameters.
Fortunately, this is not the case for training with pipeline parallelism, based on the above analysis of idle resources in the 1F1B pipeline schedule.
Indeed, adding one minimalistic early-exit layer (which has the same structure as the final output layer) to some middle (i.e.~not the first or last) stage will only make its model size and theoretical forward/backward time match exactly those of the last stage.
Therefore, the aforementioned implicit bubbles and some of the idle memory can be automatically utilized for computation related to the early-exit layers, leading to more balanced load across pipeline stages.

More specifically, the overhead to training time caused by additional early exits can be negligible.
If we choose $k$ middle stages and add one minimalistic early-exit layer to each of them, then the training time per iteration, i.e.~time for processing one data batch, will (in theory) increase only by $k \times (f_{\mathsf{EE}} + b_{\mathsf{EE}})$, where $f_{\mathsf{EE}}$ and $b_{\mathsf{EE}}$ represent the time needed for one forward and backward pass of one microbatch for one minimalistic early-exit layer, respectively.
To see this, first notice that the computation of early-exit layers in the steady 1F1B phase can perfectly fit into the implicit bubbles.
Therefore, the critical path remains the same as in the case without early exits, which consists of 
(1) the forward steps on all stages for the first microbatch, 
(2) the 1F1B steady phase on the last stage, and
(3) the backward steps on all stages for the last microbatch.
The early-exit layers, located separately on $k$ middle stages, will cause a $k \times f_{\mathsf{EE}}$ overhead to the first part of the critical path, and a $k \times b_{\mathsf{EE}}$ overhead to the third part, leading to the aforementioned claim on the overhead to training time.
See Figures~\ref{fig:pipeline_schedules}(a) and (b) for a visualization.

\subsection{Further performance optimizations}
\label{subsubsec:further_perf_opt}

\paragraph{Reducing activation memory overhead.}

So far, we have analyzed training time and memory usage by model parameters. 
Another major component of memory usage, namely the memory for gradients and optimizer states, 
can be bounded by the memory for model parameters multiplied by a universal constant.
One remaining issue that we have not addressed is the \emph{activation memory} overhead due to early-exit computation.
Most notably, the early-exit logits for one microbatch have size $s \times b \times V$, where 
$s$ is the maximum sequence length,
$b$ is the microbatch size, and 
$V$ is the vocabulary size.
If the $i$-th stage has one early exit, then a vanilla implementation of training with early exits using the 1F1B pipeline schedule (as shown in Figure~\ref{fig:pipeline_schedules}(b)) will cause a significant memory overhead of size $s \times b\times V \times (P - i + 1)$, where $P - i + 1$ is the number of in-flight microbatches \cite{Korthikanti2022} for Stage $i$.

Our solution for resolving this issue is simple: deferring the forward computation of each early-exit layer for each microbatch from the forward step to the corresponding backward step.
Note that this is feasible, because forward computation of the next stage only requires as input the hidden states returned by the current stage, while the results of early-exit forward computation are optional.
By adjusting the order of computation in this way, it is guaranteed that the early-exit logits for each microbatch are generated, used, and discarded immediately, within the same backward step; 
consequently, the activation memory overhead due to early-exit logits is reduced from $s \times b\times V \times (P - i + 1)$ to $s \times b\times V$.
As long as this amount of memory usage is less than the activation memory of \emph{all} Transformer layers within one stage for one microbatch (and no early exit is added to the first stage), the peak memory across all pipeline stages will stay unchanged, since the first stage remains the bottleneck of memory usage.

\begin{remark}
One can check that, with the above adjustment, our analysis in Section~\ref{subsubsec:util_idle_resources} about the training time overhead caused by early exits remains valid after minor modifications.
More specifically, the time overhead of one training iteration is still $k \times (f_{\mathsf{EE}} + b_{\mathsf{EE}})$; 
the only difference is that this whole overhead comes from the backward steps of the last microbatch, i.e.~the third part of the critical path in Figure~\ref{fig:pipeline_schedules}(c).
One can further reduce this overhead to $k \times b_{\mathsf{EE}}$ by moving the forward pass of the early-exit layer on each stage for each cool-down microbatch to the explicit bubble in front of the corresponding backward step (i.e.~before communication with the next stage).
With that said, our implementation does not include this modification, as it brings limited gains at the cost of complicating our codebase.
\end{remark}

\paragraph{Some rules of thumb.}

Below are some tips for maximizing training efficiency.
\begin{itemize}
    \item If possible, add early exits to the middle stages rather than to the first or last one.
    For example, adding one early exit to the end of the first stage leads to the same model architecture as adding to the beginning of the second stage, but the latter has higher training efficiency due to more balanced load across stages.
    \item Avoid adding too many early exits to the LLM.
    Despite higher flexibility during inference, the gain of adding many early exits (e.g.~one per layer) might be marginal, and comes at the cost of excessive overhead for training and inference, which is especially the case for LLMs due to large vocabulary sizes.
    Similar observations and advice have been made recently by the authors of \cite{Bae2023} as well.
    \item If there are multiple exits within the same pipeline stage, one might use the same output embedding matrix for all exits; 
    similarly, if early exits are added to the first/last stage, one might reuse the original input/output embedding matrix for early exits. 
    These choices reduce the memory usage by model parameters, at the cost of lower expressivity of early exits.
\end{itemize}

\begin{remark}
Recall from Section~\ref{sec:overview} that,
with \sys, users can choose more expressive and powerful early-exit layers beyond the minimalistic structure.
Similarly, more than one early exit can be added to each pipeline stage, which provides more flexible choices of exits during inference.
These benefits, of course, come at the cost of a higher overhead for training, and potentially for inference as well\footnote{
There is no clear answer to whether additional modules at early-exit layers will improve or hurt the overall inference speed. There is certainly a higher overhead for the computation of each early-exit layer; 
on the other hand, higher flexibility and adaptivity of the early exits can potentially enable them to produce better outputs and get selected more often during inference, leading to overall faster generation of a complete sequence.
For similar reasons, there is no clear positive or negative correlation between the number of early exits and the overall speed of generating a sequence.};
with \sys, users can conveniently choose the most suitable configurations for their own use cases.
A formal analysis of training efficiency in these general cases will soon be presented in Appendix~\ref{subsec:analysis_efficiency}.
\end{remark}

\begin{figure*}
\centering
\includegraphics[width=.33\textwidth]{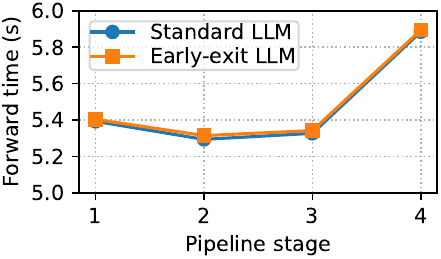}%
\includegraphics[width=.33\textwidth]{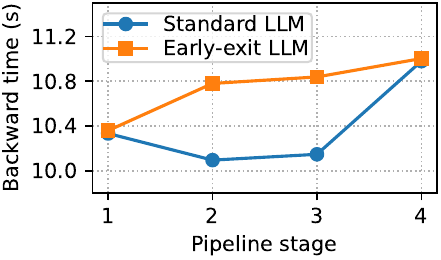}%
\includegraphics[width=.33\textwidth]{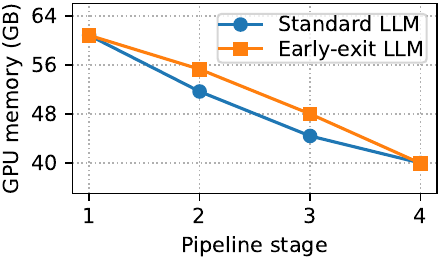}
\caption{
The forward time, backward time, and peak GPU memory of each pipeline stage for a standard 7B GPT Transformer, as well as its early-exit version that has one minimalistic early-exit layer (without layer normalization) added to each middle stage.   
Degrees of pipeline, tensor and data parallelism are 4, 1, and 1, respectively; 
the microbatch size is 2, the global batch size is 128, and the sequence length is 2048.
Note that the forward computation of early-exit layers has been deferred to the backward steps, hence not included in ``forward time'' of the first plot, but in ``backward time'' of the second plot.}
\label{fig:pipeline_load_imbalance}
\end{figure*}

\begin{table*}
\centering
\caption{Training efficiency and impacts of performance optimizations, with the same setting as in Figure~\ref{fig:pipeline_load_imbalance}. 
For the ``Early-exit'' row, the early exit at 1/4 depth is added to the end of Stage 1, and the exit at 1/2 depth is added to the end of Stage 2.
The last three rows are annotated with the performance optimization(s) adopted, where
Optimization 1 stands for deferring forward computation of early-exit layers to backward steps,
and Optimization 2 stands for moving every early exit from the end of some pipeline stage to the beginning of the next stage, in order to achieve more balanced load across stages. 
Stage 1 is the bottleneck of peak memory in most cases, except for the numbers marked by *, for which Stage 2 is the bottleneck.}
\label{tab:example_training_efficiency}
\small
\begin{tabular}{lSSSS}
    \toprule
    \multirow{2}{*}{Setup} &
    \multicolumn{2}{c}{1.3B} &
    \multicolumn{2}{c}{7B} \\
    & {Time per iteration (s)} & {Peak memory (GB)} & {Time per iteration (s)} & {Peak memory (GB)} \\
    \midrule
    Standard            & 5.23 & 19.85          & 17.75  & 62.27 \\
    Early-exit          & 5.31 & 24.05          & 17.93  & 67.42 \\
    Early-exit (1)      & 5.29 & 22.56          & 17.91  & 65.79 \\
    Early-exit (2)      & 5.28 & 20.23 {\ *}    & 17.81  & 62.27 \\
    Early-exit (1\&2) & 5.24 & 19.85          & 17.79  & 62.27 \\
    \bottomrule
\end{tabular}
\end{table*}

\paragraph{Numerical examples.}

We complement previous analytical study with a few numerical examples.
Figure~\ref{fig:pipeline_load_imbalance} illustrates load imbalance in the original 1F1B pipeline schedule for a standard 7B GPT Transformer, as well as the impacts of adding one minimalistic early-exit layer to each middle stage (with all performance optimizations applied).
Table~\ref{tab:example_training_efficiency} takes a close look at the impacts of each performance optimization;
unsurprisingly, the best training efficiency is achieved with all the proposed optimizations applied.

\subsection{Theoretical analysis for general cases}
\label{subsec:analysis_efficiency}

In the following, we derive formulas for the training time per iteration and peak GPU memory usage across pipeline stages under general configurations of early exits,
in terms of the quantities listed in Table~\ref{tab:symbols_for_analysis}.
Some of these quantities, especially $\{f_{\osf}, b_{\osf}, m_{\osf}, \mact_{\osf}\}$, can be further calculated analytically with lower-level quantities such as sequence length, microbatch size, hidden size, vocabulary size and others \cite{Rajbhandari2019ZeROMO,Narayanan2021,Korthikanti2022},
or estimated empirically by profiling in practice.

It is assumed that all pipeline stages have the same number of Transformer layers on the backbone,
and all early-exit layers follow the same structure.
We also assume that input and output embedding matrices are untied, although it is not hard to extend our analysis to more general cases.
For simplicity, we ignore the point-to-point communication latency between pipeline stages in our analysis,
since it typically takes a minor proportion of the overall training time, and can often be overlapped with computation.

\begin{table}
\centering
\caption{A list of notation for theoretical analysis.
The following abbreviations are used:
\textsf{IN} --- \underline{in}put processing layer;
\textsf{EE} --- \underline{e}arly-\underline{e}xit layer;
\textsf{FE} --- \underline{f}inal-\underline{e}xit layer;
\textsf{BB} --- Transformer \underline{b}ack\underline{b}one on one pipeline stage.
}
\label{tab:symbols_for_analysis}
\small
\begin{tabular}{ll}
    \toprule
    Notation  &  Definition \\
    \midrule
    $P$  &  Number of pipeline stages \\
    $N_i$  &  Number of early exits in Stage $i \in [P]$ \\
    $M$               &  Number of microbatches for one training iteration \\
    $f_{\osf}, b_{\osf}$ & Forward and backward time of one $\osf \in \{\IN, \EE, \FE, \BB\}$ for one microbatch \\
    $m_{\osf}$     &  Memory usage for storing the model parameters of one $\osf \in \{\IN, \EE, \FE, \BB\}$ \\
    $\mact_{\osf}$ &  Activation memory of one $\osf \in \{\IN, \EE, \FE, \BB\}$ for one microbatch \\
    $\ind(\cdot)$ & The indicator function, $\ind(\mathcal{E}) = 1$ if the event $\mathcal{E}$ holds true, and $0$ otherwise \\
    \bottomrule
\end{tabular}
\end{table}

\subsubsection{Training time per iteration}
\label{subsubsec:analysis_training_time}

\newcommand{\itertime}{\text{time per iteration}}

Let us analyze the training time per iteration step by step.

\paragraph{Step 1: simplified analysis without early exits.}

We start by recalling the simplified analysis in prior works for the standard 1F1B pipeline schedule \cite{Narayanan2021},
which assumes that all stages have the same forward time $f$ and backward time $b$ for one microbatch.
Recall from Figure~\ref{fig:pipeline_schedules} that the critical path of one training iteration consists of three parts:
\begin{itemize}
    \item Part 1: forward steps of the first microbatch on all stages except the last one, which takes time $(P-1) \times f$; 
    \item Part 2: the steady 1F1B phase with all $M$ microbatches on the last stage, which takes time $M \times (f+b)$;
    \item Part 3: backward steps of the last microbatch on all stages except the last one, which takes time $(P-1) \times b$.
\end{itemize}
Taking the sum of these three parts, we have
\begin{equation*}
    \itertime = (P-1) \times f + M \times (f + b) + (P-1) \times b 
    = \underbrace{(P-1) \times (f+b)}_{\text{Parts 1 and 3}} + \underbrace{M \times (f + b)}_{\text{Part 2}}. 
\end{equation*}

\paragraph{Step 2: fine-grained analysis without early exits.}

We provide a more fine-grained analysis, by considering the computation related to the input processing layer ($\IN$) and final-exit layer ($\FE$) separately from the Transformer backbone ($\BB$).
It is reasonable to assume that $f_{\IN} < f_{\FE}$ and $b_{\IN} < b_{\FE}$, 
which implies that the last stage is the bottleneck of forward and backward time.
In this setting, 
Parts 1 and 3 of the critical path takes time $f_{\IN} + (P-1) \times f_{\BB}$ and $b_{\IN} + (P-1) \times b_{\BB}$, respectively.
Similarly, Part 2 now takes time $M \times (f_{\BB} + b_{\BB} + f_{\FE} + b_{\FE})$.
Taking the sum, we arrive at
\begin{equation*}
    \itertime = \underbrace{f_{\IN} + b_{\IN} + (P-1) \times (f_{\BB} + b_{\BB})}_{\text{Parts 1 and 3}}
    + \underbrace{M \times (f_{\BB} + b_{\BB} + f_{\FE} + b_{\FE})}_{\text{Part 2}}.
\end{equation*}

\paragraph{Step 3: fine-grained analysis with early exits.}

Finally, we are ready for our analysis in the setting with early exits ($\EE$).
First, it can be checked that early-exit layers incur a total overhead of 
$${\sum_{i \in [P-1]} N_i \times (f_{\EE} + b_{\EE})}$$ 
to Parts 1 and 3.
In addition, the sum of forward and backward time of one microbatch for Stage $i$ becomes
\begin{equation*}
f_{\BB} + b_{\BB} + \ind(i=1) \times (f_{\IN} + b_{\IN}) + \ind(i=P) \times (f_{\FE} + b_{\FE}) + {N_i \times (f_{\EE} + b_{\EE})}.
\end{equation*}
Note that Part 2, namely the steady 1F1B phase on the last stage, is now \emph{bottlenecked} by the maximum forward and backward time across all stages.
Putting things together, we have the following upper bound:
\begin{align*}
&\itertime \\
&\quad \le f_{\IN} + b_{\IN} + (P-1) \times (f_{\BB} + b_{\BB}) + {\color{red}\sum_{i \in [P-1]} N_i \times (f_{\EE} + b_{\EE})}    \\
&\quad\quad + M \times {\max_{i \in [P]}} \Big\{ f_{\BB} + b_{\BB} + \ind(i=1) \times (f_{\IN} + b_{\IN}) + \ind(i=P) \times (f_{\FE} + b_{\FE}) + {\color{red}N_i \times (f_{\EE} + b_{\EE})} \Big\}.
\end{align*}
The overhead caused by early-exit layers has been highlighted.
The first term corresponds to Parts 1 and 3 of the critical path, and the second term corresponds to Part 2.
It is worth mentioning that the second term of the overhead might not take effect due to the maximum operator, 
in which case the early-exit overhead to one training iteration is independent of the number of microbatches.


\subsubsection{Peak GPU memory}
\label{subsubsec:analysis_training_memory}

\newcommand{\memory}{\text{memory}}

Consider the GPU memory for one specific pipeline stage, say Stage $i$.
Recall that GPU memory usage during training is mainly composed of memory for model parameters, gradients, optimizer states, and intermediate activations \cite{Rajbhandari2019ZeROMO,Korthikanti2022}.
Among them, the memory for gradients and optimizer states (for many common optimizers, such as SGD with momentum or Adam) is proportional to the number of model parameters.
Therefore, we assume that 
$$
\memory(\text{model parameters, gradients, optimizer states}) 
= \alpha \times \memory(\text{model parameters})
$$
for some universal constant $\alpha > 1$,
whose concrete value depends on the choice of optimizer and numerical precisions.
Under this assumption, we have
\begin{equation}
\text{total memory} \approx \alpha \times \memory(\text{model parameters}) + \memory(\text{activations}).
\label{eq:total_memory}
\end{equation}

Now it boils down to deriving the memory for model parameters and activations.
Model parameters in Stage $i \in [P]$ include the Transformer backbone ($\BB$),
the input processing layer ($\IN$) if $i = 1$, the final-exit layer ($\FE$) if $i = P$,
and the early exits ($\EE$).
In other words, we have
\begin{equation}
\memory(\text{model parameters}) = m_{\BB} + \ind(i=1) \times m_{\IN} + \ind(i=P) \times m_{\FE} + {\color{red}N_i \times m_{\EE}}.
\label{eq:memory_model_parameters}
\end{equation}
The memory for intermediate activations can be similarly divided into components 
corresponding to $\BB$, $\IN$, $\FE$ and $\EE$.
Note that according to Section~\ref{subsubsec:further_perf_opt}, 
the peak memory usage by activations within one early-exit layer is only $\mact_{\EE}$ 
regardless of the number of in-flight microbatches $P + 1 - i$.
Thus one has
\begin{equation}
\memory(\text{activations}) =  
(P + 1 - i) \times \mact_{\BB} + \ind(i=1) \times P \times \mact_{\IN} + \ind(i=P) \times \mact_{\FE} + {\color{red}N_i \times \mact_{\EE}}.
\label{eq:memory_activations}
\end{equation}
Combining Eq.~\eqref{eq:memory_model_parameters} and~\eqref{eq:memory_activations} with Eq.~\eqref{eq:total_memory}, 
we arrive at an estimate of memory usage for Stage $i$:
\begin{align*}
\text{total memory} \approx \alpha \times &\Big( m_{\BB} + \ind(i=1) \times m_{\IN} + \ind(i=P) \times m_{\FE} + {\color{red}N_i \times m_{\EE}} \Big) \\
+ &\Big( (P + 1 - i) \times \mact_{\BB} + \ind(i=1) \times P \times \mact_{\IN} + \ind(i=P) \times \mact_{\FE} + {\color{red}N_i \times \mact_{\EE}} \Big).
\end{align*}
The memory overhead caused by early-exit layers has been highlighted in red.
Finally, taking the maximum over $i \in [P]$ gives the peak GPU memory across all pipeline stages.

\section{Additional experiments}
\label{sec:appendix_experiments}

\subsection{A comparison between two inference methods}
\label{subsec:exp_two_inference_methods}

Since both pipeline-based method and KV recomputation generate the same output for the same prompt,
here we only compare the inference latency of both methods.
Given that the pipeline-based method uses 4 GPUs for a pipeline parallelism (PP) degree of 4, we allow KV recomputation to use a tensor parallelism (TP) degree of 1 or 4 for a fair comparison.
The empirical results on XSUM and CNN/DailyMail tasks are shown in Figure~\ref{fig:compare_inference_methods}.
We first notice that the pipeline-based approach outperforms KV recomputation with $\text{TP}=1$ for confidence thresholds smaller than 1 (i.e.~when early exiting actually happens), 
despite the point-to-point communication overhead.
Moreover, KV recomputation with $\text{TP}=4$ is faster than the pipeline-based approach;
it is worth noting, however, that the small speedup from $\text{TP}=1$ to $\text{TP}=4$ for KV recomputation is only possible with high-end hardware like A100 GPUs connected by high-bandwidth communication.
In many use cases of LLM inference, such hardware is not available, and pipeline parallelism is the only option for partitioning a model that is too large to fit into the memory of one single device.
In sum, the proposed pipeline-based approach is a more practical option for LLM inference with model partitioning in a wide range of scenarios.

\begin{figure}[tbp]
\centering
\begin{subfigure}{.25\textwidth}
    \centering
    \includegraphics[width=\textwidth]{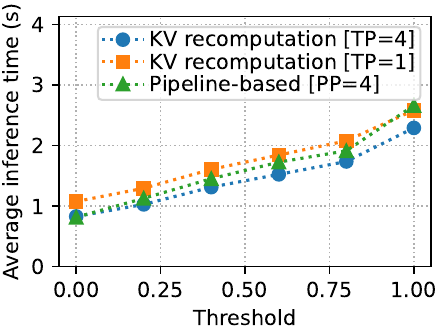}
    \caption{XSUM}
\end{subfigure}%
\begin{subfigure}{.25\textwidth}
    \centering
    \includegraphics[width=\textwidth]{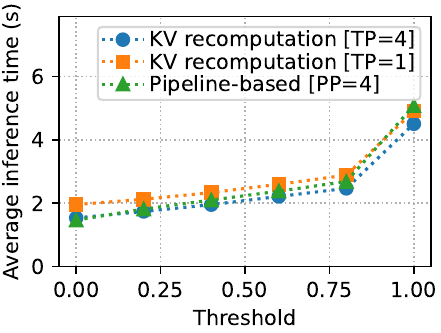}
    \caption{CNN/DailyMail}
\end{subfigure} 
\caption{Inference latencies of the pipeline-based method and KV recomputation for our 7B early-exit model.}
\label{fig:compare_inference_methods}
\end{figure}

\subsection{Examples of generated texts}
\label{subsec:examples_generated_texts}

Example texts generated by our early-exit models are presented in Tables~\ref{tab:example_inference_speed} and~\ref{tab:example_inference_prob}.

\begin{table}
\centering
\caption{Sequences generated by our 7B early-exit model for the same prompt, with varying confidence thresholds and inference latencies.
Differences from the result of full-model inference are highlighted in red.}
\label{tab:example_inference_speed}
\small
\begin{tabular}{p{.2\textwidth}p{.7\textwidth}}
\toprule
Prompt  & Artificial General Intelligence is    \\
\midrule
Full model \newline(Time: \textbf{1.42} s)   &  {a branch of computer science that studies the development of computer systems that 
can perform tasks that normally require human intelligence. \newline
The term "artificial intelligence" was coined by John McCarthy in 1956.}   \\
\midrule
$\text{Threshold} = 0.8$ \newline(Time: \textbf{1.17} s)  &  {a branch of computer science that studies the development of computer systems that can perform tasks that normally require human intelligence. \newline
The term "artificial intelligence" was coined by John McCarthy in 1956.}   \\
\midrule
$\text{Threshold} = 0.4$ \newline(Time: \textbf{0.89} s)  &  {a branch of computer science that studies the development of computer systems that can perform tasks that normally require human intelligence. \newline
The term {\color{red}artificial general intelligence} was coined by John McCarthy in {\color{red}1965.}}  \\
\midrule
$\text{Threshold} = 0.2$ \newline(Time: \textbf{0.58} s)  &  {a branch of computer science that deals with the development of computer programs that can {\color{red}be used to perform tasks such as problem solving, decision making, and 
learning. \newline
Artificial General Intelligence is a branch of computer science that}}  \\
\bottomrule
\end{tabular}
\end{table}

\begin{table}
\centering
\caption{The predicted token and confidence at each exit of our 32-layer 7B early-exit model for generating each token within one sequence.
The prompt is ``The capital of China is'',
and the sequence generated by full-model inference is
``Beijing.
\textbackslash n
The capital of the United States is Washington, D.C.''.
Tokens that are predicted with confidence higher than 0.8 are highlighted; interestingly, all exits make the same prediction for each of these tokens.}
\label{tab:example_inference_prob}
\small
\begin{tabular}{p{.12\textwidth}p{.12\textwidth}p{.12\textwidth}p{.12\textwidth}p{.12\textwidth}p{.12\textwidth}}
\toprule
Layer 8  &  &  Layer 16  &  &   Final layer  &   \\
\midrule
located  &  (0.116)  &  Be  &  (0.202)  &  Be  &  (0.609)  \\ 
\textcolor{blue}{ij}  &  \textcolor{blue}{(0.992)}  &  \textcolor{blue}{ij}  &  \textcolor{blue}{(0.995)}  &  \textcolor{blue}{ij}  &  \textcolor{blue}{(0.996)}  \\ 
\textcolor{blue}{ing}  &  \textcolor{blue}{(0.999)}  &  \textcolor{blue}{ing}  &  \textcolor{blue}{(0.999)}  &  \textcolor{blue}{ing}  &  \textcolor{blue}{(0.999)}  \\ 
,  &  (0.370)  &  .  &  (0.535)  &  .  &  (0.503)  \\
\textbackslash n  &  (0.300)  &  \textbackslash n  &  (0.310)  &  \textbackslash n  &  (0.227)  \\ 
The  &  (0.126)  &  The  &  (0.150)  &  The  &  (0.158)  \\
capital  &  (0.074)  &  capital  &  (0.125)  &  capital  &  (0.102)  \\
of  &  (0.757)  &  \textcolor{blue}{of}  &  \textcolor{blue}{(0.832)}  &  \textcolor{blue}{of}  &  \textcolor{blue}{(0.860)}  \\
China  &  (0.523)  &  the  &  (0.094)  &  the  &  (0.115)  \\
country  &  (0.146)  &  United  &  (0.438)  &  United  &  (0.436)  \\
\textcolor{blue}{States}  &  \textcolor{blue}{(0.880)}  &  \textcolor{blue}{States}  &  \textcolor{blue}{(0.816)}  &  \textcolor{blue}{States}  &  \textcolor{blue}{(0.858)}  \\
is  &  (0.733)  &  is  &  (0.617)  &  is  &  (0.687)  \\
\textcolor{blue}{Washington}  & \textcolor{blue}{(0.877)}  &  \textcolor{blue}{Washington}  &  \textcolor{blue}{(0.992)}  &  \textcolor{blue}{Washington}  &  \textcolor{blue}{(0.989)}  \\
,  &  (0.443)  &  ,  &  (0.394)  &  ,  &  (0.462)  \\
D  &  (0.571)  &  D  &  (0.728)  &  D  &  (0.686)  \\
\textcolor{blue}{.}  &  \textcolor{blue}{(0.999)}  &  \textcolor{blue}{.}  &  \textcolor{blue}{(0.997)}  &  \textcolor{blue}{.}  &  \textcolor{blue}{(0.997)}  \\ 
\textcolor{blue}{C}  &  \textcolor{blue}{(0.969)}  &  \textcolor{blue}{C}  &  \textcolor{blue}{(0.967)}  &  \textcolor{blue}{C}  &  \textcolor{blue}{(0.934)}  \\
\textcolor{blue}{.}  &  \textcolor{blue}{(0.830)}  &  \textcolor{blue}{.}  &  \textcolor{blue}{(0.873)}  &  \textcolor{blue}{.}  &  \textcolor{blue}{(0.904)}  \\ 
\bottomrule
\end{tabular}
\end{table}

\subsection{Training with different early-exit configurations}
\label{subsec:exp_training_losses_more_config}

As mentioned in Section~\ref{sec:overview}, \sys supports a wide variety of early-exit configurations, such as numbers, locations, structures and training loss weights of early exits.
Here, we demonstrate the training of two additional 7B early-exit models.
The first model has the same configurations as those of the 7B early-exit model considered in Section~\ref{subsec:exp_training}, except that each early-exit layer contains layer normalization and an MLP, on top of the output embedding matrix.
The second model has three early exits located at Layer 4, 12 and 20, respectively, all containing layer normalization, with early-exit loss weights set to 0.1, 0.2 and 0.3.
In addition, all embedding matrices are tied.
Our empirical results, as shown in Figure~\ref{fig:training_loss_various_configurations}, confirm the convergence of training losses for both models.
It would be interesting future work to compare more thoroughly different early-exit configurations for both training and inference.

\begin{figure}
\centering
\includegraphics[width=.45\textwidth]{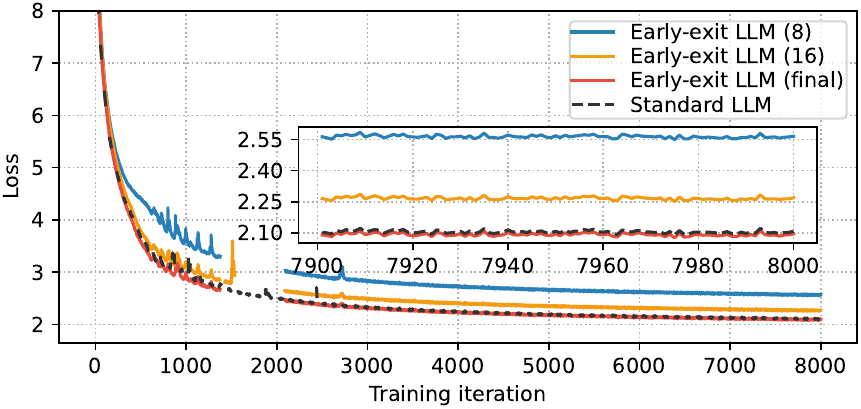}%
\includegraphics[width=.45\textwidth]{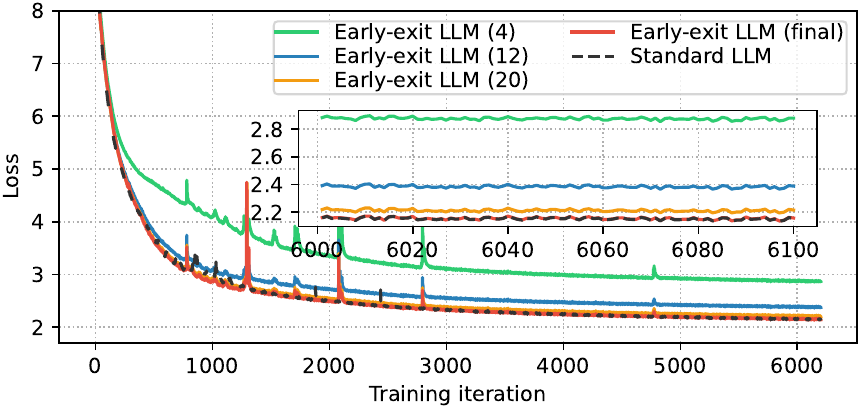}
\caption{Convergence of early-exit/final-exit training losses for two additional models with different configurations.
Each curve is annotated with the index of the Transformer layer that the corresponding exit is connected to.
The missing parts in the curves of the first figure were caused by technical issues with wandb (\url{https://wandb.ai}) when the experiments were running.
}
\label{fig:training_loss_various_configurations}
\end{figure}

\section{Advanced features}
\label{sec:appendix_advanced_features}

This section presents more details about the advanced features introduced in Section~\ref{subsec:additional_features} for training early-exit LLMs.

\subsection{Non-constant early-exit loss weights}
\label{subsec:non_constant_loss_weights}

The weights of early-exit losses in the training objective of Eq.~\eqref{eq:multi_exit_training_loss} can be changing rather than constant during the training process, just like the learning rate or other hyperparameters.
Allowing the weights to change can offer more fine-grained control of how gradients from multiple losses jointly impact the backbone and early/final output layers of the model.

One concrete option that we offer in \sys is \emph{warm-up}.
With this option, early-exit loss weights start at small values, and gradually increase with training iterations until reaching the pre-specified maximum values.
This approach has been adopted in prior works \cite{Kaya2018}.
The idea is to encourage the deep neural network to primarily optimize for the full-model output quality from the beginning of the training process,
while the skill of early exiting is gradually acquired with minor or no negative impact on the final outputs of the full model.

Another option is \emph{cool-down}, which does the opposite and decreases early-exit loss weights during the training process.
This option is inspired by prior works \cite{Lee2014DeeplySupervisedN,Szegedy2014GoingDW} that leverage early-exit losses for the purpose of regularizing the training process of deep neural networks.
Such ``deep supervision'' provided by early-exit losses can stabilize and accelerate the convergence of training, and potentially improve the intermediate features learned by the neural network.
As early-exit loss weights gradually decay, the regularization gets weaker, so that the network becomes more focused on its primary objective, namely the final-output quality.

\subsection{Filling explicit bubbles}
\label{subsec:appendix_fill_pipeline_bubble}

To further leverage the explicit bubbles within the 1F1B pipeline schedule, i.e.~the light gray areas in Figure~\ref{fig:pipeline_fill_bubbles},  
we design and implement a novel approach of filling them with \emph{partial} forward/backward computation of additional microbatches.
See Figure~\ref{fig:pipeline_fill_bubbles} for a visualization.
This is primarily inspired by the idea from \cite{Osawa2022}: instead of designing a new, sophisticated schedule that has a lower bubble ratio, one may seek to fill the bubbles of an existing schedule with useful computation, which leads to better resource utilization and faster training.
Note that our method changes the optimization semantics.
From the perspective of stochastic optimization, we can prove formally that with such additional computation and under certain conditions, one obtains an \emph{unbiased} gradient estimate with \emph{reduced variance} for the original training objective.
The remaining of this section is dedicated to details of the methodology and analysis.

\subsubsection{Methodology}

Our approach is explained below.
For notational simplicity, let us call the explicit bubbles between the warm-up and steady phases as Part 1, and the bubbles during the cool-down phase as Part 2.
For each part, we fill them with some computation for $K$ additional microbatches.
More concretely, for Part 1, the $i$-th inserted microbatch (where $i \in [K]$) goes through forward computation of the first $K+1-i$ pipeline stages, followed by backward computation of all visited early-exit losses;
for Part 2, each inserted microbatch goes through forward computation of all stages, followed by backward computation of the final and early-exit losses (if any) only for the last few stages.
In this way, each training iteration can process more data without any time overhead, as long as the number of inserted microbatches and the number of stages for partial forward/backward computation are chosen appropriately.

\paragraph{How many microbatches can be inserted?}

The maximum number of microbatches that can be inserted into Part 1 or 2 of the explicit bubbles, 
without increasing training time per iteration, is
$\lfloor(p-1) b / (f+b)\rfloor = \lfloor(p-1) / (f/b + 1)\rfloor$,
where $f/b$ is (an estimate of) the ratio between forward and backward time.
To see this for Part 1 (resp.~2), one simply needs to notice that the first (resp.~last) pipeline stage has a bubble size $b (p-1)$, while the total forward and backward time for each microbatch is $f+b$. 
Dividing the first value by the second one concludes the proof.

\paragraph{Details about Part 1.}

With $K$ inserted microbatches, the $i$-th microbatch is supposed to go through the forward pass of the first $K+1-i$ stages. 
However, if there is no early exit on Stage $K+1-i$, then this microbatch only needs to go through the forward pass up to the last stage (among the first $K+1-i$ stages) that has at least one early exit.

\paragraph{Details about Part 2.}

We can calculate the maximum number of backward stages for each inserted microbatch, while ensuring that no overhead to training time per iteration occurs.
Notice that for the $i$-th microbatch, the remaining bubble size after its forward computation at the last stage is $(p-1)b - f - (i-1)(f+b) = p b - i (f+b)$.
Dividing this value by $b$ gives the number of backward stages 
$\lfloor (p b - i (f+b)) / b \rfloor = \lfloor p - i (f/b + 1) \rfloor$.

\paragraph{Some remarks.}

(1) There are some limitations to the proposed approach.
For example, it requires that there is no tied/shared model parameter across pipeline stages; otherwise, the gradients from partial forward/backward passes might (in theory) be harmful rather than beneficial for training the model.
Another concern is that inserting microbatches into Part 1 of the explicit bubbles can cause additional overhead of activation memory in early stages.
(2) While Part 1 of this approach requires the existence of early exits, Part 2 is actually applicable to training standard models without early exits as well.
(3) One might raise concerns about the inefficiency of data usage, as some microbatches only undergo partial forward/backward passes.
In general, this is not an issue for LLM pre-training, since training data is usually abundant, and the complete training process might not even go through one epoch of the data.


\paragraph{A different perspective.}

The actual implementation of this method in \sys takes a different perspective.
Instead of inserting additional microbatches into the original schedule, we keep the number of microbatches per training iteration unchanged, and do the following:
(1)~we replace the full forward/backward computation of a few microbatches with partial forward/backward computation, which can be placed in the bubble between the warm-up and steady phases;
(2)~we truncate the backward computation of the last few microbatches.
These two steps correspond exactly to Parts 1 and 2 introduced earlier, and the visualization in Figure~\ref{fig:pipeline_fill_bubbles}(b) remains valid.
Such an implementation reduces the training time per iteration, at the cost of lower data utilization.


\subsubsection{Analysis}

\begin{figure}[tbp]
\centering
\includegraphics[width=0.5\textwidth]{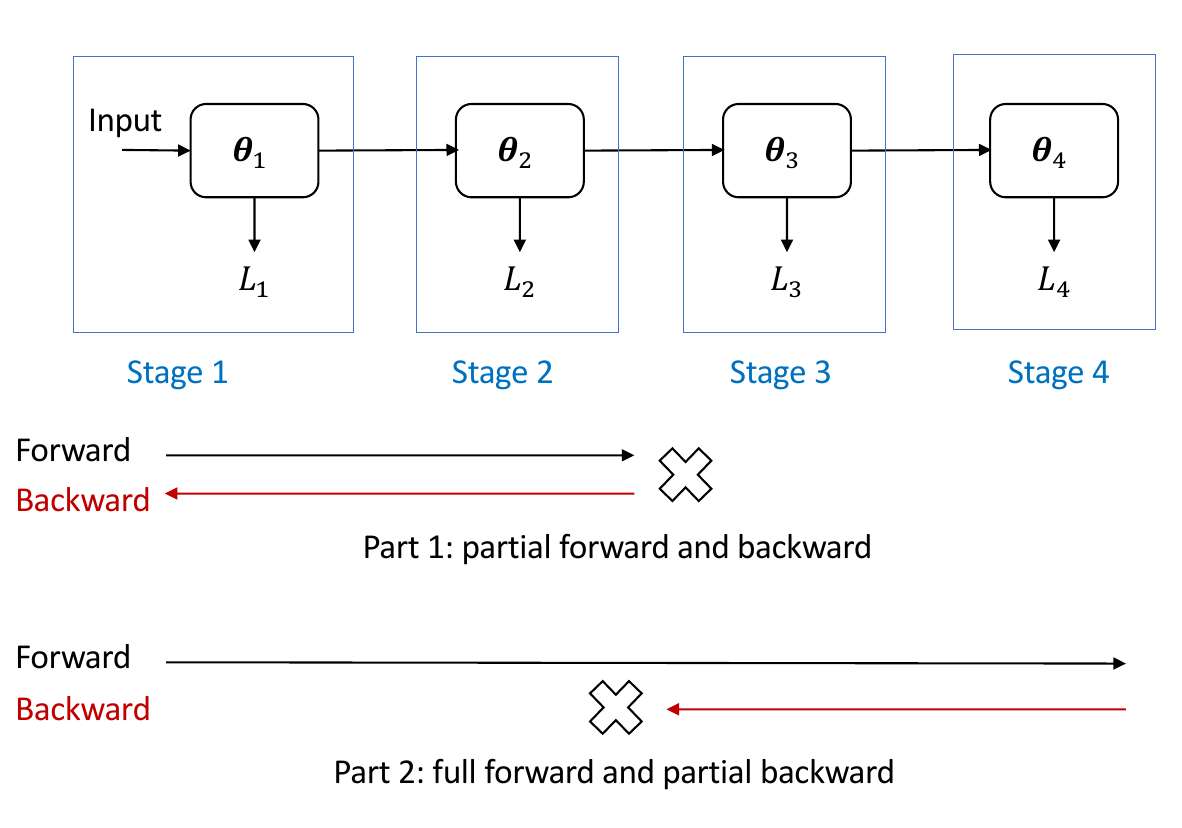}
\caption{An illustration of the partial forward/backward passes when filling pipeline bubbles with additional microbatches.}
\label{fig:pipeline_partial_forward_backward}
\end{figure}

We argue that such partial forward/backward computation offers useful gradient information for training, under the assumption that there is no tied model parameter across pipeline stages.
For a theoretical justification, let us take the perspective of stochastic optimization, and assume that each microbatch is sampled independently from some data distribution.

\begin{claim}[Informal]
Under the above assumptions, the accumulated gradient of one training iteration, with extra updates from additional microbatches inserted into Part 2, remains (after some appropriate entrywise scaling) an \emph{unbiased} estimate, but with \emph{reduced variance}, of the gradient for the targeted population risk.
A similar claim holds true for Part 1 as well, except for certain extreme (and unlikely) situations where gradients of different early/final losses have a strong negative correlation.
\end{claim}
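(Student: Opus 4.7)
The plan is to view each microbatch as an i.i.d.~draw from the data distribution, so that every per-microbatch loss $\loss_j^{(m)}$ is an unbiased estimator of $\loss_j^{\mathsf{pop}} := \E[\loss_j]$. Under the no-tied-parameters assumption, a parameter block $\btheta_k$ residing on stage $k$ only appears in losses on stages $j \geq k$, yielding $G_k := \nabla_{\btheta_k}\loss^{\mathsf{pop}} = \sum_{j \geq k}\nabla_{\btheta_k}\loss_j^{\mathsf{pop}}$. For Part~2 the argument is short: each inserted microbatch performs a full forward and a partial backward restricted to the last few stages; but for any $\btheta_k$ in those back-propagated stages, every $j \geq k$ already lies in the backward set, so the partial backward still produces the full gradient $\sum_{j \geq k}\nabla_{\btheta_k}\loss_j^{(m)}$. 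Letting $N_k$ be the number of microbatches (full plus partial) whose backward sweeps stage $k$, the accumulated gradient at $\btheta_k$ is a sum of $N_k$ i.i.d.~unbiased copies of $G_k$; the entrywise scaling amounts to dividing by $N_k$, giving an unbiased estimator with variance $\sigma_k^2/N_k \leq \sigma_k^2/M$.

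Part~1 is subtler, since the inserted microbatches now perform a \emph{partial} forward on the first few stages, so only the visited early-exit losses are differentiated. A partial microbatch reaching stages $1,\dots,K+1-i$ contributes only $\sum_{k \leq j \leq K+1-i}\nabla_{\btheta_k}\loss_j^{(m)}$ to $\btheta_k$, a truncated---hence biased---estimator of $G_k$. I would recover unbiasedness by a per-loss decomposition: introduce accumulators $A_{k,j}$ collecting $\nabla_{\btheta_k}\loss_j^{(m)}$ only over the $N_j$ microbatches that actually back-propagated $\loss_j$, and form $\widehat{G}_k := \sum_{j \geq k} A_{k,j}/N_j$. Each summand is by construction an average of $N_j$ i.i.d.~unbiased estimators of $\nabla_{\btheta_k}\loss_j^{\mathsf{pop}}$, so by linearity $\E[\widehat{G}_k]=G_k$; the ``appropriate entrywise scaling'' referred to in the claim corresponds exactly to these per-loss weights $1/N_j$.

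For the variance-reduction half of the claim for Part~1, I would expand $\var(\widehat{G}_k)$ into diagonal and off-diagonal contributions. The diagonal terms $\sigma_{k,j}^2/N_j$ strictly shrink because each per-loss average uses more samples than before. The off-diagonal terms involve cross-covariances of the form $\cov(\nabla_{\btheta_k}\loss_j,\nabla_{\btheta_k}\loss_{j'})$; when these are non-negative---the benign regime in which different exits share a descent direction---the total variance still strictly decreases, whereas sufficiently strong negative correlation among exit gradients is precisely what can flip the sign of the net effect, matching the ``except'' clause. The main technical obstacle will be the covariance bookkeeping: different pairs $(j,j')$ have different overlap patterns among contributing microbatches, so one has to track carefully which products of gradient estimators are actually averaged together, and then distill this into a clean sufficient condition on the covariance matrix of $\{\nabla_{\btheta_k}\loss_j\}_{j \geq k}$ that matches the informal statement.
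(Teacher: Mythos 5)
Your proposal is correct and follows essentially the same route as the paper: both arguments hinge on the observation that $\partial \loss_i / \partial \btheta_j = \bm{0}$ for $i<j$, so a partial backward over the last stages already yields the \emph{full} gradient for the swept parameter blocks (Part 2), while the truncated Part-1 contributions are repaired by rescaling each loss (equivalently, each of your per-loss accumulators) by the reciprocal of its sample count. The only piece you leave as a plan --- the covariance bookkeeping for Part 1 --- is carried out in the paper for the two-group case as an elementary identity, $\var(\ehat) - \var(\ehatp) = \tfrac{1}{N(N+1)}\bigl(\var(a_1) + 2\cov(a_1,b_1)\bigr)$, which pins down your ``sufficiently strong negative correlation'' threshold as $\cov(a_1,b_1) < -\tfrac{1}{2}\var(a_1)$.
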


In the remaining of this section, we provide an informal proof for this claim, in a concrete example with four pipeline stages and one additional microbatch inserted into either Part 1 or Part 2;
a visualization can be found in Figure~\ref{fig:pipeline_partial_forward_backward}.
Recall that the training objective is defined as $\loss = \sum_{i \in [4]} \loss_i$, where each $\loss_i$ is a weighted sum of all losses on Stage $i$.
We denote the model parameters as $\btheta = [\btheta_1, \btheta_2, \btheta_3, \btheta_4]$, with correspondence to how the model is partitioned into four pipeline stages.
One key observation that will be useful is that $\partial \loss_i / \partial \btheta_j = \bm{0}$ for any $1 \le i < j \le 4$, due to the sequential nature of the model.

\paragraph{Analysis of Part 2.}

We start with an analysis for Part 2, which is slightly simpler.
Suppose that an additional microbatch goes through the full forward pass, followed by a partial backward pass covering only the last two stages, namely $\btheta_4$ and $\btheta_3$.
Then, the gradient from this microbatch is
\begin{equation*}
\text{gradient} = \bigg[\bm{0}, \bm{0}, \frac{\partial (\loss_3 + \loss_4)}{\partial \btheta_3}, \frac{\partial (\loss_3 + \loss_4)}{\partial \btheta_4}\bigg]
= \bigg[\bm{0}, \bm{0}, \frac{\partial \loss}{\partial \btheta_3}, \frac{\partial \loss}{\partial \btheta_4}\bigg];
\end{equation*}
here, the first equality is due to Proposition~\ref{prop:auxiliary_loss_bp} and the partial backward pass, while the second equality follows from the observation that $\partial \loss_i / \partial \btheta_j = \bm{0}$ for any $i < j$.

Now, suppose that a total of $B$ microbatches was originally used for one training iteration, and we let $\loss^{(k)}$ denote the loss of the $k$-th microbatch.
Then, with the additional $(B+1)$-th microbatch inserted into Part 2 of the explicit bubbles, the accumulated gradient of one training iteration (normalized by $1/B$) becomes:
\begin{align*}
\text{accumulated gradient} 
&= \frac{1}{B} \Bigg( \sum_{k \in [B]} \frac{\partial \loss^{(k)}}{\partial \btheta} + \bigg[\bm{0}, \bm{0}, \frac{\partial \loss^{(B+1)}}{\partial \btheta_3}, \frac{\partial \loss^{(B+1)}}{\partial \btheta_4}\bigg] \Bigg) \\
&= \bigg[ 
\frac{1}{B}\sum_{k \in [B]} \frac{\partial \loss^{(k)}}{\partial \btheta_1}, 
\frac{1}{B}\sum_{k \in [B]} \frac{\partial \loss^{(k)}}{\partial \btheta_2},
\frac{1}{B}\sum_{k \in [{\color{red}B+1}]} \frac{\partial \loss^{(k)}}{\partial \btheta_3},
\frac{1}{B}\sum_{k \in [{\color{red}B+1}]} \frac{\partial \loss^{(k)}}{\partial \btheta_4} \bigg]
\end{align*}
Taking the expectation over the data distribution, we realize that the additional microbatch essentially scales up gradients for $\btheta_3$ and $\btheta_4$ by a factor of $(B+1) / B$.
Or, with a simple entrywise scaling on the gradients for $\btheta_3$ and $\btheta_4$ by a factor of $B / (B+1)$, we recover an unbiased estimate of the gradient for the targeted population risk, with reduced variance:
\begin{align*}
\text{accumulated gradient} 
&= \bigg[ 
\frac{1}{B}\sum_{k \in [B]} \frac{\partial \loss^{(k)}}{\partial \btheta_1}, 
\frac{1}{B}\sum_{k \in [B]} \frac{\partial \loss^{(k)}}{\partial \btheta_2},
\frac{1}{{\color{red}B+1}}\sum_{k \in [{\color{red}B+1}]} \frac{\partial \loss^{(k)}}{\partial \btheta_3},
\frac{1}{{\color{red}B+1}}\sum_{k \in [{\color{red}B+1}]} \frac{\partial \loss^{(k)}}{\partial \btheta_4} \bigg].
\end{align*}

\paragraph{Analysis of Part 1.}

Suppose that an additional microbatch goes through the forward and backward computation only for the first two stages, corresponding to model parameters $\btheta_1$ and $\btheta_2$.
Then, the gradient from this microbatch is
\begin{equation*}
\text{gradient} = \bigg[\frac{\partial (\loss_1 + \loss_2)}{\partial \btheta_1}, \frac{\partial (\loss_1 + \loss_2)}{\partial \btheta_2}, \bm{0}, \bm{0}\bigg]
= \frac{\partial (\loss_1 + \loss_2)}{\partial \btheta};
\end{equation*}
here, the first equality is due to the partial forward and backward passes, and the second equality follows again from the observation that $\partial \loss_i / \partial \btheta_j = \bm{0}$ for any $i < j$.

Now, suppose that a total of $B$ microbatches was originally used for one training iteration.
We denote the loss of the $k$-th microbatch as $\loss^{(k)} = \sum_{i \in [4]} \loss^{(k)}_i$.
Then, with the additional $(B+1)$-th microbatch inserted into Part 1 of the explicit bubbles, the accumulated gradient of one training iteration (normalized by $1/B$) becomes:
\begin{align*}
\text{accumulated gradient} &= \frac{1}{B} \bigg(\sum_{k \in [B]} \frac{\partial \loss^{(k)}}{\partial \btheta} + \frac{\partial (\loss^{(B+1)}_1 + \loss^{(B+1)}_2)}{\partial \btheta} \bigg) \\
&= \frac{1}{B} \bigg(\sum_{k \in [B]} \frac{\partial (\loss^{(k)}_1 + \loss^{(k)}_2 + \loss^{(k)}_3 + \loss^{(k)}_4)}{\partial \btheta} + \frac{\partial (\loss^{(B+1)}_1 + \loss^{(B+1)}_2)}{\partial \btheta} \bigg) \\
&= \frac{1}{B} \bigg(\sum_{k \in [{\color{red}B+1}]} \frac{\partial (\loss^{(k)}_1 + \loss^{(k)}_2)}{\partial \btheta} + \sum_{k \in [B]} \frac{\partial (\loss^{(k)}_3 + \loss^{(k)}_4)}{\partial \btheta} \bigg).
\end{align*}
Taking the expectation over the data distribution, we see that the additional microbatch essentially scales up the weights of $\loss_1$ and $\loss_2$ in the gradient by a factor of $(B+1) / B$.
Or, if the weights of $\loss_1$ and $\loss_2$ are manually scaled by a factor of $B / (B+1)$ during training, then we recover an unbiased gradient estimate for the original risk $\loss = \sum_{i \in [4]} \loss_i$:
\begin{equation*}
\text{accumulated gradient} = \frac{1}{{\color{red}B+1}} \sum_{k \in [{\color{red}B+1}]} \frac{\partial (\loss^{(k)}_1 + \loss^{(k)}_2)}{\partial \btheta} + \frac{1}{B} \sum_{k \in [B]} \frac{\partial (\loss^{(k)}_3 + \loss^{(k)}_4)}{\partial \btheta} 
\end{equation*}
We claim that this leads to reduced gradient variance, \emph{unless} gradients of early and later losses have a strong negative correlation.
In the formal analysis of variance below, we consider scalars for notational simplicity, though it can be easily generalized to vectors or tensors by replacing scalar multiplication with inner product.
\begin{proposition}
\label{prop:variance}
Consider two distributions $\mathcal{A}$ and $\mathcal{B}$ with means $\astar$ and $\bstar$, respectively.
Let $a_1, \dots, a_N, a_{N+1}$ be independent and identically distributed (i.i.d.) samples of $\mathcal{A}$, and $b_1, \dots, b_N$ be i.i.d.~samples of $\mathcal{B}$, but for each $i$, $a_i$ can be correlated with $b_i$.
Consider two estimates of $\astar + \bstar$, defined as follows:
\begin{equation*}
    \ehat \coloneqq \ahat_N + \bhat_N, \quad \ehatp \coloneqq \ahat_{N+1} + \bhat_N, \quad \text{where}
\end{equation*}
\begin{equation*}
    \ahat_{k} \coloneqq \frac{1}{k}\sum_{i \in [k]} a_i, \quad 
    \bhat_{k} \coloneqq \frac{1}{k} \sum_{i \in [k]} b_i, \quad k \in \{N, N+1\}.
\end{equation*}
Then it holds that 
\begin{equation*}
    \E[\ehat] = \E[\ehatp] = \astar + \bstar, \quad 
    \var(\ehat) - \var(\ehatp) = \frac{1}{N(N+1)} \var(a_1) + \frac{2}{N(N+1)} \cov(a_1, b_1).
\end{equation*}
\end{proposition}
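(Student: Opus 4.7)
The plan is to prove unbiasedness by a one-line application of linearity of expectation, and then reduce the variance identity to a clean bookkeeping of pairwise covariances that exploits the independence-across-$i$ structure of the pairs $(a_i, b_i)$. Specifically, since $\E[a_i] = \astar$ and $\E[b_j] = \bstar$ for all $i, j$, both $\ehat$ and $\ehatp$ have expectation $\astar + \bstar$ immediately.

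For the variance part, I would first expand
\begin{equation*}
\var(\ehat) - \var(\ehatp) = \bigl[\var(\ahat_N) - \var(\ahat_{N+1})\bigr] + 2\bigl[\cov(\ahat_N, \bhat_N) - \cov(\ahat_{N+1}, \bhat_N)\bigr],
\end{equation*}
using that the $\bhat_N$ variance cancels. The first bracket equals $\var(a_1)\bigl(\tfrac{1}{N} - \tfrac{1}{N+1}\bigr) = \tfrac{\var(a_1)}{N(N+1)}$ by the standard i.i.d.\ sample-mean formula. For the second bracket, the crucial observation I would invoke is that the samples are drawn so that the pairs $\{(a_i, b_i)\}$ are independent across $i$, which forces $\cov(a_i, b_j) = 0$ whenever $i \neq j$. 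Then a direct bilinear expansion gives $\cov(\ahat_N, \bhat_N) = \tfrac{1}{N}\cov(a_1, b_1)$ and $\cov(\ahat_{N+1}, \bhat_N) = \tfrac{N}{N(N+1)}\cov(a_1, b_1) = \tfrac{1}{N+1}\cov(a_1, b_1)$, since only the $N$ diagonal pairs $(i,i)$ with $i \le N$ contribute in the second double sum.

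Subtracting yields $\cov(\ahat_N, \bhat_N) - \cov(\ahat_{N+1}, \bhat_N) = \tfrac{\cov(a_1, b_1)}{N(N+1)}$, and combining the two brackets gives the claimed identity. The only substantive step is the second one, and even there the main obstacle is essentially notational: one must be explicit that the sampling assumption (``$a_i$ can be correlated with $b_i$'' for \emph{matched} indices) is what zeroes out all cross terms with $i \neq j$, since the proposition's statement leaves this implicit. I would therefore open the proof by making this independence-across-$i$ assumption explicit (it is the natural reading given that each microbatch is sampled independently, per the discussion preceding the proposition), after which the rest is a short calculation with no further subtleties.
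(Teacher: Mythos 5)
Your proposal is correct and follows essentially the same route as the paper's proof: both decompose the variance of each sum into the two marginal variances plus twice the cross term, cancel the $\bhat_N$ contribution, and use the independence of the pairs across indices to reduce the cross-covariances to $\tfrac{1}{N}\cov(a_1,b_1)$ and $\tfrac{1}{N+1}\cov(a_1,b_1)$. The only difference is cosmetic (you work with the $\cov(\cdot,\cdot)$ operator while the paper writes out expectations of centered products), and your explicit flagging of the independence-across-$i$ assumption is a reasonable clarification of what the paper leaves implicit.
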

In this proposition, $\ahat_N, \ahat_{N+1}$ and $\bhat_N$ correspond to 
$\frac{1}{{B}} \sum_{k \in [{B}]} \frac{\partial (\loss^{(k)}_1 + \loss^{(k)}_2)}{\partial \btheta}$,
$\frac{1}{{B+1}} \sum_{k \in [{B+1}]} \frac{\partial (\loss^{(k)}_1 + \loss^{(k)}_2)}{\partial \btheta}$ and
$\frac{1}{B} \sum_{k \in [B]} \frac{\partial (\loss^{(k)}_3 + \loss^{(k)}_4)}{\partial \btheta}$ 
in our previous analysis of accumulated gradients, respectively.
Hence our claim on gradient variance follows immediately from the conclusion of this proposition.
\begin{proof}[Proof of Proposition~\ref{prop:variance}]
First, unbiasedness is obvious.
As for variance, we have the following elementary calculation:
\begin{align*}
    \var(\ehat) &= \E\big[(\ahat_N + \bhat_N - \astar - \bstar)^2\big] = \E\big[(\ahat_N - \astar)^2\big] + \E\big[(\bhat_N - \bstar)^2\big] + 2 \E\big[(\ahat_N - \astar)(\bhat_N - \bstar)\big], \\
    \var(\ehatp) &= \E\big[(\ahat_{N+1} + \bhat_N - \astar - \bstar)^2\big] = \E\big[(\ahat_{N+1} - \astar)^2\big] + \E\big[(\bhat_N - \bstar)^2\big] + 2 \E\big[(\ahat_{N+1} - \astar)(\bhat_N - \bstar)\big].
\end{align*}
Moreover, 
\begin{equation*}
    \E\big[(\ahat_N - \astar)^2\big] = \frac{1}{N} \var(a_1), \quad \E\big[(\ahat_{N+1} - \astar)^2\big] = \frac{1}{N+1} \var(a_1), 
\end{equation*}
and 
\begin{align*}
    \E\big[(\ahat_N - \astar)(\bhat_N - \bstar)\big] &= \frac{1}{N^2} \sum_{i\in[N]} \E\big[(a_i - \astar)(b_i - \bstar)\big] = \frac{1}{N} \cov(a_1, b_1), \\
    \E\big[(\ahat_{N+1} - \astar)(\bhat_N - \bstar)\big] &= \frac{1}{N(N+1)} \sum_{i\in[N]} \E\big[(a_i - \astar)(b_i - \bstar)\big] = \frac{1}{N+1} \cov(a_1, b_1).
\end{align*}
Putting things together,
\begin{align*}
    \var(\ehat) - \var(\ehatp) &= \E\big[(\ahat_N - \astar)^2\big] - \E\big[(\ahat_{N+1} - \astar)^2\big] \\
    &\quad\quad + 2 \Big(\E\big[(\ahat_N - \astar)(\bhat_N - \bstar)\big] - \E\big[(\ahat_{N+1} - \astar)(\bhat_N - \bstar)\big]\Big) \\
    &= \frac{1}{N(N+1)} \var(a_1) + \frac{2}{N(N+1)} \cov(a_1, b_1),
\end{align*}
which concludes our proof.
\end{proof}

\section{Implementations}
\label{sec:implementation}

The implementation of \sys is based on Megatron-LM \cite{Narayanan2021}, primarily extending Megatron-LM's model architectures, pipeline scheduling, and inference service to support the training and inference of early-exit LLMs.
We introduce each of these aspects in more detail below.

\subsection{Model architectures}

We have introduced a new class of models called \texttt{EarlyExitGPTModel}, which is the early-exit counterpart of \texttt{GPTModel} in the original model library of Megatron-LM.
The model is constructed with a few other classes, including \texttt{EarlyExitTransformerLayer}, \texttt{EarlyExitTransformer}, and \texttt{EarlyExitLanguageModel}.
\texttt{EarlyExitTransformerLayer} is a replacement for the original \texttt{ParallelTransformerLayer} in Megatron-LM. 
It adds an early-exit structure on top of the standard Transformer layer, which allows it to generate outputs for both the main network backbone and the early exit; for the latter, it returns a lazy loss function during training, or tokens during inference.
This module supports various customizations of the early-exit structure; 
besides the minimalistic structure with an output embedding matrix and an optional output normalization layer, 
one might add e.g.~a MLP or a complete Transformer layer. 
These additional structures can be combined in any desired manner and can be placed before or after the backbone part of this layer.
On the other hand, \texttt{EarlyExitTransformer} and \texttt{EarlyExitLanguageModel} are mainly used to propagate the early-exit outputs to the top-level model. 
They are capable of stopping the forward computation at the early-exit layer and returning the intermediate outputs, which facilitates accelerated inference.

\subsection{Pipeline scheduling}

We have adjusted the existing 1F1B schedule for early-exit LLMs, as shown in Figure~\ref{fig:pipeline_schedules}.
To fill implicit bubbles and reduce GPU memory overhead, lazy loss functions of early-exit layers are returned together with outputs of the backbone network during forward steps. 
These lazy functions are not actually called until their corresponding auxiliary losses (cf.~Section~\ref{subsubsec:Methodology}) are calculated in the backward steps.
For the method of filling explicit bubbles proposed in Section~\ref{subsec:additional_features}, we have inserted partial forward/backward computation of additional microbatches into warm-up and cool-down phases of the 1F1B schedule.
The number of inserted microbatches and partial forward/backward stages can be automatically calculated through the user-specified (estimate of) ratio between backward and forward time.

\subsection{Inference service}
\label{subsec:implementation_inference}

To support inference of early-exit LLMs, we have refactored the text-generation module of Megatron-LM.

For inference with pipeline parallelism, i.e.~the \emph{pipeline-based} approach proposed in Section~\ref{sec:inference}, we have re-implemented the forward process.
With our implementation, the first pipeline stage will wait for an exit signal from the early/final exits of all subsequent stages after its forward computation is completed.
Each subsequent stage will send an exit signal and the output token to the first stage, if there is an exit within the stage that satisfies the exit condition.
Upon receiving the signal and generated token, the first stage will immediately start the forward pass for generating the next token.
With this implementation, regardless of the early-exit layers' positions in subsequent stages, the inference service can immediately generate a token whenever early exiting happens on some stage, without waiting for the completion of the entire stage (except for the first stage).

For inference without pipeline parallelism, we have implemented a mechanism of \emph{KV recomputation}, which is a variant of synchronized parallel decoding proposed recently in \cite{Bae2023}. 
In this approach, we maintain a list of the most recent tokens that have missing KV caches in deep layers due to early exiting. 
During each forward pass, we include these early-exit tokens in the current forward pass, which allows for direct recomputation of the KV caches for these tokens and thus avoids the issue of missing KV caches.
Acceleration of sequence generation is still achieved, thanks to the batching effects of GPU computation.
To avoid the endless accumulation of early-exit tokens, we enforce a full-model forward pass whenever the number of early-exit tokens reaches a pre-specified value.

\section{Preliminaries}
\label{sec:preliminaries}

\paragraph{Transformers.}

The Transformer architecture \cite{vaswani2017attention,tay2022efficient} has been playing a dominant role in natural language processing (NLP) and large language models (LLMs) \cite{Bommasani2021OnTO,Zhao2023ASO}.
It is typically composed of an input embedding layer, a stack of Transformer layers, and finally an output layer.
Each Transformer layer consists of cross-attention and/or self-attention modules \cite{BahdanauCB14,kim2017structured,parikh2016decomposable}, a multi-layer perceptron (MLP), and layer normalization (LayerNorm \cite{ba2016layernorm} or RMSNorm \cite{zhang2019root}).
Transformers can be categorized into three types: encoder-only, encoder-decoder, and decoder-only.
For the latter two, there is an output embedding matrix in the output layer, which transforms hidden states into logits on a (typically large) vocabulary that can be used for generating tokens.
An LLM can be learned by unsupervised pre-training, e.g.~minimizing the negative log-likelihood of next-token prediction on a large corpus \cite{Radford2018ImprovingLU,Radford2019LanguageMA}.
In this work, we focus on the decoder-only generative pre-training (GPT) Transformer architecture \cite{Radford2018ImprovingLU,Radford2019LanguageMA}, though many of our ideas are widely applicable to other Transformer architectures or generic deep neural networks.

\paragraph{Early-exit LLMs.}

An early-exit neural network can be obtained by adding to a standard neural network some early-exit layers that turn intermediate hidden states into early outputs \cite{Xin2020,Schwartz2020}.
During inference for a given input, the model starts a forward pass and decides (at each early exit) whether to return an output or continue forwarding via certain rules, e.g.~to return an output whenever the confidence of prediction is above a pre-defined threshold \cite{Schwartz2020,Schuster2022}.


The standard way of training an early-exit model is to minimize a weighted sum of early-exit and final-exit training losses \cite{Schwartz2020,Schuster2022}.
Note that early-exit layers bring additional computational overhead to training.
This is especially the case for LLMs, primarily due to the large output embedding matrix of size $h \times V$ within each early-exit layer, where $h$ is the hidden dimension and $V$ is the vocabulary size.
We call an early-exit layer \emph{minimalistic} if it has the same structure as the final output layer of the GPT model architecture \cite{Radford2018ImprovingLU,Radford2019LanguageMA}, which includes an output embedding matrix, plus an optional LayerNorm/RMSNorm in front of it.
Additional modules can be added to early-exit layers for increased expressivity and adaptivity of early exits.

\paragraph{3D parallelism.}

3D parallelism refers to the combination of data, tensor, sequence and pipeline parallelism,
and has been implemented in state-of-the-art LLM frameworks such as 
Megatron-LM \cite{Shoeybi2019,Narayanan2021,Korthikanti2022,Smith2022},
DeepSpeed \cite{Rasley2020DeepSpeedSO,Smith2022}, 
Mesh-TensorFlow \cite{Shazeer2018Mesh}, 
Alpa \cite{Zheng2022Alpa}, 
InternLM \cite{2023internlm}, 
among others.
With \emph{data parallelism}, each GPU handles the forward and backward computation for one part of the data batch, and then the results are aggregated at the end of the training iteration.
When the model is too large to fit in a single GPU, model partitioning becomes necessary and can be used in conjunction with data parallelism.
With \emph{tensor (and sequence) parallelism}, each large module (e.g.~a linear layer) is divided into multiple pieces that are assigned to different GPUs, so that each computational task related to it (e.g.~large matrix multiplication) can be divided into smaller tasks and solved in parallel.
One major limitation of tensor (and sequence) parallelism is that it requires expensive collective communication such as all-reduce operations, and thus is only viable for high-end GPUs within the same computing node, with high-bandwidth communication among them.

\emph{Pipeline parallelism} \cite{Narayanan2019,Narayanan2021-PipeDream,Fan2021,Li2021}, on the other hand, partitions a deep model along the depth dimension into multiple pipeline stages.
Moreover, each data batch is divided into multiple microbatches, and their forward/backward computational tasks are scheduled among those multiple pipeline stages.
More specifically, each stage performs the forward computation for each microbatch and sends the resulting hidden states to another stage; later on, it performs the backward computation for the same microbatch after receiving the gradients of the training objective with respect to the sent hidden states.
Pipeline parallelism only requires sparse and inexpensive point-to-point (P2P) communication between pipeline stages, which makes it applicable and oftentimes must-have in much broader scenarios when tensor (and sequence) parallelism is infeasible or insufficient, whether in GPU clusters or in decentralized settings \cite{Yuan2022,yang2023holmes}.
The main concern with pipeline parallelism is its low utilization rate of computational resources, due to pipeline bubbles and load imbalance across pipeline stages \cite{Narayanan2021}. 

\section{Related works}
\label{sec:related_works}

\paragraph{Early exiting.}

As introduced in Section~\ref{sec:introduction}, early exiting has been widely applied for accelerating inference of deep neural networks in the literature \cite{Graves2016AdaptiveCT,Liu2020,Hou2020,Zhou2020,Elbayad2020,Schwartz2020,Xin2020,Schuster2021,Xin2021,Li2021AcceleratingBI,Hu2023SmartBERTAP,Teerapittayanon2016,Huang2018,Kaya2018,Laskaridis2021AdaptiveIT,Scardapane2020WhySW,Han2021DynamicNN,Xu2023SurveyDynamic,Dai2023ApparateRE}.
In terms of early-exit Transformers in the NLP domain, the majority of prior works are focused on BERT \cite{Devlin2019BERTPO} or other encoder-only models for classification tasks \cite{Liu2020,Hou2020,Zhou2020,Elbayad2020,Schwartz2020,Xin2020,Schuster2021,Xin2021,Li2021AcceleratingBI,Hu2023SmartBERTAP}.
Recent works have begun to study token-wise early exiting for accelerating inference of encoder-decoder or decoder-only LLMs in autoregressive sequence generation \cite{Schuster2021,DelCorro2023,Bae2023,Varshney2023AcceleratingLI}.
While they are largely focused on designing \emph{inference} mechanisms, 
the lack of support in prior works for \emph{training} early-exit models with massive 3D parallelism inevitably poses an obstacle to truly scaling up early-exit LLMs.
Our work on \sys is one important step towards eliminating this obstacle, and
we also contribute a novel pipeline-based inference mechanism along the way.


\paragraph{Early-exit inference with KV caching.}

Several approaches have been recently proposed to resolve the conflict between early exiting and KV caching in autoregressive generation.
One approach \cite{Elbayad2020,Li2021AcceleratingBI,Schuster2022} is to copy the hidden states of the current token at the exiting layer to all later layers, which will be used to compute the keys and values at later attention layers. 
Despite its efficiency, this method causes a deviation from the inference process that the model is trained to excel at, which can harm the output quality.
Another solution \cite{DelCorro2023} is to pre-specify the exiting layer for each token, while ensuring that KV missing in previous tokens will not hinder generation of later tokens;
with this approach, the ability of token-wise adaptive selection of exits is inevitably lost.
The third method \cite{Bae2023,tang2023deed} is to store the hidden states of recent tokens that were generated with early exiting, and whenever KV missing happens, run a batch forward pass with the current and recent tokens to fulfill the KV caches.
Acceleration by this method relies on the batching effect of GPU computation.
In comparison, our proposed pipeline-based method achieves early-exit acceleration in theoretical time complexity.

\paragraph{Other applications of early-exit models.}

Interestingly, early exiting has been used for many other purposes in the literature, besides accelerating inference.
For example, early-exit losses can provide deep supervision and regularization for training deep networks with enhanced stability of convergence \cite{Szegedy2014GoingDW,Lee2014DeeplySupervisedN}, 
which motivates our implementation of the cool-down option for non-constant early-exit loss weights,
as explained in Appendix~\ref{subsec:non_constant_loss_weights}.
Other works use early exits for interpreting the intermediate features learned by each layer of a Transformer \cite{Langedijk2023DecoderLensLI}, 
for improving outputs of the final exit \cite{Gera2023TheBO},
or as the draft models in speculative decoding \cite{Kim2023SpeculativeDW}.
Early exiting has also found applications in
long-tailed classification \cite{Duggal2020ELFAE},
federated learning with heterogeneous clients \cite{Ilhan2023ScaleFLRF,Zhong2022SemiHFLSF},
token reduction \cite{liu2024a},
layer-wise training of deep networks \cite{Scardapane2020WhySW},
among others.
We believe that our progress in this work can be beneficial for these various purposes.


\paragraph{Other methods of accelerating LLM inference.}

It is worth mentioning that there are other types of dynamic neural networks \cite{Han2021DynamicNN,Xu2023SurveyDynamic} that facilitate conditional computation and elastic inference, 
such as layer skipping \cite{Bengio2013EstimatingOP,Bengio2015ConditionalCI,Wang2022SkipBERTEI,DelCorro2023,YomDin2023JumpTC,Zeng2023LearningTS,Wang2023HadSkipHA}
and mixtures of experts \cite{Jacobs1991AdaptiveMO,Shazeer2017OutrageouslyLN,Fedus2021SwitchTS}.
Another line of work aims to accelerate LLM inference 
by designing new decoding methods rather than model architectures \cite{Leviathan2022FastIF,Santilli2023AcceleratingTI,zhao2023lookahead}.
Each of these approaches has its own pros and cons, and some of them can be complemented by early exiting.
A detailed comparison between these methods is beyond the scope of the current work.


\end{document}